\newtheorem{Theorem}{Theorem}
\newtheorem{Lemma}{Lemma}
\begin{document}

\begin{frontmatter}

\title{Double logistic regression approach to biased positive-unlabeled data}

\author[A]{\fnms{Konrad}~\snm{Furmańczyk}}\orcid{0000-0002-7683-4787}
\author[B,D]{\fnms{Jan}~\snm{Mielniczuk}}%\orcid{0000-0003-2621-2303}
\author[C]{\fnms{Wojciech}~\snm{Rejchel}}%\orcid{0000-0003-1148-1439}
\author[B,D]{\fnms{Paweł}~\snm{Teisseyre}\thanks{Corresponding Author. Email: teisseyrep@ipipan.waw.pl}} %\orcid{0000-0002-4296-9819} 

\address[A]{University of Life Sciences,
Warsaw, Poland}
\address[B]{Polish Academy of Sciences, Warsaw, Poland}
\address[C]{Nicolaus Copernicus University, Toruń, Poland.}
\address[D]{Warsaw University of Technology, Warsaw, Poland}

\begin{abstract}
Positive and unlabelled learning is an important non-standard inference problem which arises naturally in many applications. The significant limitation of almost all existing methods  addressing it lies in assuming that the propensity score function is constant and does not  depend on features (Selected Completely at Random assumption), which is unrealistic in many practical situations.  
  Avoiding this assumption, we consider parametric approach to the problem of joint estimation of posterior probability and propensity score functions.
 We show that if  both these  functions are logistic with different parameters (double logistic model) then the corresponding parameters are  identifiable.
 Motivated by  this, we propose  two approaches to their estimation:  a joint maximum likelihood  method and the second approach based on an alternating maximization of two Fisher consistent approximations. 
 Our experimental results show that the proposed methods  perform on par or better than the existing methods based on Expectation-Maximisation scheme.
\end{abstract}

\end{frontmatter}

\section{Introduction}

Positive unlabelled (PU) inference is based on data sets containing  labelled observations $(S=1)$ which are all positive ($Y=1$),  and unlabelled ones ($S=0$) which may  either  belong to a positive or a negative  class ($Y$ is either 1 or 0). Examples of such experimental setup abound in medicine \cite{Walleyetal2018, LAN2016, Cerulo2010, Yangetal2014}, text and image analysis \cite{Fung2006, Liu2003, LiLiu2003,Jain2018}, ecology \cite{Ward2009, PearceBoyce2006} and survey  data \cite{SECHIDIS2017}.  For example, medical databases may contain only information about diagnosed patients who  have a certain disease $(S=1)$ whereas un-diagnosed patients $(S=0)$ may have it or not. In survey sampling,  asking a sensitive question (e.g. on   use of illicit drugs) may lead to under-reporting, as beside the positive respondents $(Y=1)$ who answer the question truthfully,  there are respondents who engage in this activity and do not  admit  it  ($Y=1,S=0)$. Their answers are   merged together with those of people who abstain from  such behaviour and  answer the question negatively $(Y=0,S=0)$ \cite{Bahorik2014}.
 PU data  occur frequently in  text classification problems. For example, when classifying web page preferences, some web pages can be bookmarked as positive ($S=1$) by the user  whereas all other pages are treated as unlabelled ($S=0$). Among unlabelled pages ($S=0$), one can find both positive and negative pages. The other important example is associated with detecting unlawful content in social networks. In this case, certain content has been marked as unlawful (e.g. some images or posts), however unlawful content may still exist among the unmarked profiles.

In the seminal paper \cite{ElkanNoto2008}  an influential  approach to this problem which is  proposed based on assumption that probability of labelling of positive elements is not instance dependent,
i.e. $P(S=1\vert Y=1,x)=P(S=1\vert Y=1)$ (Selected Completely at Random, or SCAR, assumption), where $x$ is a feature vector and constant 
$c=P(S=1\vert Y=1)$ is called label frequency. 
For a review of the developments, almost exclusively based on SCAR,  we refer to \cite{BekkerDavis2018}. The SCAR assumption facilitates inference significantly, as in this case, aposteriori probability $P(Y=1 \vert x)$, which is often of the main interest, can be written as $P(Y=1 \vert x)=c^{-1}P(S=1 \vert x)$, where $P(S=1 \vert x)$ can be estimated using the observed PU data. 
In view of this, estimation of $c$ becomes a crucial problem %and therefore there is a significant amount of research devoted to this issue in PU learning 
\cite{ElkanNoto2008, Ramaswamy2016, Jainetal2016, Plessis2017, BekkerAAAI18, JaskieElkan2020, Lazeckaetal2021}. The common approach here is to treat first the unlabelled observations as coming from negative ($Y=0$) class and then detect among them those observations, which, due to their  covariates' closeness to  the labelled data, should be assigned to the positive class (see e.g. \cite{Sansone2018}, \cite{zhang2019}). Other important approaches are based on suitable modification of the risk function using weighting to account for unobservability of negative examples (see \cite{duPlessis2014} and \cite{Kiryo2017}).

However, the SCAR assumption   fails  in many practical situations. For example, an age is  an important factor in
screening for  many diseases (such as a prostate cancer) \cite{Leitzmann2012} which may lead to a positive diagnosis.
%diagnosis of many diseases (such as prostate cancer) \cite{Leitzmann2012}.
Moreover, the occurrence of other diseases (e.g. obesity), may  play a role in undertaking  an   in-depth scrutiny for other
potential illnessess (e.g. diabetes) \cite{Vistisen2014}.
%be an important factor in diagnosis of the  disease of interest (e.g. diabetes) \cite{Vistisen2014}. 
In surveys, the criminal background of the interviewee is a strong indication  that  obtaining an untrustworthy answer is likely. In a general case, the situation is much more complicated than under SCAR, as $P(S=1\vert x)$ may be small even if $P(Y=1\vert x)$ is large. %It happens when  the probability of being labelled for a certain $x$ is small. 
Importantly, ignoring the fact that the probability of labelling depends on features will lead to biased estimation of the posterior probability. An accurate estimation of this probability enables  a precise estimation of the posterior probability which, in its turn,  leads to  an accurate prediction. % of the considered model. 
The problem draws more attention recently \cite{bekker2019ecml,Gong2021, Gerych2022}. 
Besides approaches   based on EM algorithms (see below), other methods   to tackle this problem are based  on the concept of probabilistic gap (\cite{He2018}),  assumptions that ordering of posterior and propensity score with respect to $x$ coincide (\cite{kato2019}) or on application of  deep learning techniques (\cite{Na2020}).
 In a broader context, the situation when elements of underlying sample  were chosen taking the values of their covariates into account  is frequently termed labelling (selection) bias or covariate shift, and its importance is recognised by many authors (\cite{huang2006}, \cite{kesmodel2018}).

The restrictiveness of SCAR assumption calls for  functional modelling of the probability of being labelled which corresponds to propensity score in causal inference. The  important steps in this direction has been taken recently in  \cite{bekker2019ecml} and \cite{Gong2021}, where variants of Expectation-Maximisation (EM) algorithm have been considered,  see Section \ref{related} for the detailed discussion of the methods. The present paper  also addresses this issue and contains   the following new developments: firstly, we consider parametric models for propensity score $e(x)=P(S=1\vert Y=1,x)$ and $P(Y=1\vert x)$ and show that their parameters are identifiable given values of $P(S=1\vert x)$ only. In particular, the both functions can be modeled as logistic  functions with different parameters; this setup will be called double logistic model in the following.
This naturally  leads to  an introduction of  a joint maximum likelihood (ML) estimators of these parameters and  establishing their consistency  when the model is well specified.  
We note that the  analysis of this parametric approach is hindered by the fact that even in the SCAR case log-likelihood {\it is not necessarily  a concave function} of the underlying parameters.
Secondly, we introduce a method (called 'Two MODELS' method, 'TM' in brief) consisting in alternate maximising  {\it concave} empirical surrogates for  expected likelihoods of posterior probability of $Y=1$ and the propensity score. We  prove  that the method is consistent under some simplifying assumptions. Moreover, we  numerically investigate the behaviour of our  proposals and show that the TM method consistently exhibits superior   or comparable behaviour to the best of existing methods \cite{bekker2019ecml, Gong2021}.

The rest of the paper is structured as follows. In Section \ref{prelims} we formally describe the PU learning problem and define basic quantities. In Section \ref{Sec:Joint estimation of posterior probability and propensity score function} we discuss the problem of joint estimation of posterior probability and propensity score functions and state main theoretical results. The algorithms (including the proposed ones) are described in Section  \ref{Sec:Algorithms}, whereas the two existing most related methods (EM and LBE) are discussed in Section \ref{related}. In Section \ref{Sec:Numerical experiments} we describe the results of experiments and in Section \ref{Sec:Conclusions} we conclude our work. The Appendix  
available at \url{https://github.com/teisseyrep/putm}
contains the proofs and some additional numerical results. 

\section{Background}
\label{prelims}
We first introduce basic notations. Let $X$ be a random variable corresponding to feature vector, $Y\in\{0,1\}$ be a true class label and $S\in\{0,1\}$ an indicator of  an example being labelled ($S=1$) or not ($S=0$). We assume that there is some unknown distribution $P_{Y,X,S}$ such that  
$(Y_i,X_i,S_i), i=1,\ldots,n$ is iid sample drawn from it. Observed data  consists of $(X_i,S_i),i=1,\ldots,n$ (so called, the single sample scenario).  
Only positive examples ($Y=1$)  can be labelled, i.e. $P(S=1\vert X,Y=0)=0$.  Thus we know that $Y=1$ when $S=1$ but when $S=0$, label  $Y$ can be either 1 or 0.
Our aim    is to learn binary posterior distribution of $Y$ given $X=x$ i.e. $y(x)=P(Y=1\vert X=x)$  and we only   observe samples from distribution of $(X,S)$, where $S=Y$ with a certain probability.
To this end we define a  binary {\it posterior distribution  function of  $S$  given $x$} as $s(x)=P(S=1\vert x)$ and {\it a propensity score function} $e(x)=P(S=1\vert Y=1,x)$. We note that
\begin{equation}
 \label{posteriorS}  
 s(x)=e(x)y(x)
 \end{equation}
 as $P(S=1\vert Y=0,x)=0$. 
 In the following we assume that $e(x)$ may depend on $x$ that is we do not impose restrictive and hard to verify Selected Completely at Random (SCAR) assumption. %To be more precise we still impose Selected at Random 
%(SAR) assumption  which states that $e(x)$ does not depend on distribution of $(Y,X)$ in the sense that for two distributions $P_{Y,X,S}$ and $P_{Y',X',S}$  propensity scores are the same.
%This in particular implies that $e(x)$ does not depend on $y(x)$.
We note that SCAR assumption implies that the distribution of $X$ for labelled data coincides with its distribution in the positive class, but this is not true in general. This makes inference much harder task in the general setting as the distribution of the labelled data is biased.
We note that $e(x)$ plays a role of a nuisance functional parameter and our primary objective is to estimate $y(x)$.

We stress that in parallel to the single sample scenario, the case-control  (c-c) scenario is frequently considered for PU data. In this scenario in addition to the labelled data from the positive class we have at our disposal unlabelled sample drawn from {\it the marginal distribution} of $X$. The form of the second sample makes the  inference problem   different (and in general easier) than that studied here. Moreover, solutions obtained for c-c case are not transferable to the single sample case.  For approaches developed for c-c case see e.g. \cite{Kiryo2017} and \cite{kato2019}.

\section{Joint estimation of posterior probability and propensity score function}
\label{Sec:Joint estimation of posterior probability and propensity score function}

As only $s(x)$ is observable and $e(x)$ is  an unknown function which  is not constant, identification of posterior $y(x)$ in view of  (\ref{posteriorS}) is clearly impossible in general. However, we will show that if certain parametric assumptions are imposed on $y(x)$ and $e(x)$ then both  functions are identifiable up to an interchange of $y(x)$ and $e(x)$. Namely, let $\sigma(s)=1/(1+e^{-s})$ be a logistic function and assume that both $y(x)$ and $e(x)$ are governed by the logistic model:
\begin{equation}
 \label{param}  
 y(x)=\sigma(\beta_0^* +\beta^{*T}x)\quad\quad e(x)=\sigma(\gamma_0^* +\gamma^{*T}x).
 \end{equation}
 We will call PU model for which (\ref{param}) is satisfied a double logistic model. Note that no assumptions on the distribution of the vector of features $X$ is imposed.
 As logistic model is quite versatile, it is not unrealistic to assume, that in many situations both $y(x)$ and $e(x)$ may follow it, at least approximately. We also note that Two Models method  proposed below can be  combined with other classifiers (e.g. neural networks), but this is left for a future research.\\
 For any $b_0\in R$ and $b\in R^p$ with some abuse of notation we let $\tilde b=(b_0,b^T)^T.$
We have the following result which plays an important role in proving the consistency of joint maximum likelihood estimation.

\begin{Theorem}
\label{identification}
Consider $s(x)$  defined as in (\ref{posteriorS}) and assume  that  $y(x)$ and $e(x)$ satisfy~(\ref{param}). Then parameters $\tilde\beta^*$ and $\tilde\gamma^*$ are uniquely defined up to an interchange  of $y(x)$ and $e(x)$ i.e. if for some $\tilde \beta$ and $ \tilde \gamma$ we have  $s(x)=\sigma(\beta_0 +\beta^Tx)\sigma(\gamma_0 +\gamma^Tx)$ for all $x\in R^p$, 
then   $(\tilde \beta, \tilde \gamma) = (\tilde\beta^*,  \tilde\gamma^*)$ or $(\tilde \beta, \tilde \gamma) =(\tilde\gamma^*,\tilde\beta^*). $
\end{Theorem}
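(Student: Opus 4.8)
The plan is to turn the multiplicative identity between the two products of logistic functions into an additive identity among exponentials of affine functions, and then to exploit the fact that exponentials with distinct linear exponents are linearly independent.

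First I would rewrite the hypothesis $s(x)=\sigma(\beta_0+\beta^Tx)\sigma(\gamma_0+\gamma^Tx)=\sigma(\beta_0^*+\beta^{*T}x)\sigma(\gamma_0^*+\gamma^{*T}x)$ by taking reciprocals and using $1/\sigma(u)=1+e^{-u}$. Expanding the two products $(1+e^{-(\beta_0+\beta^Tx)})(1+e^{-(\gamma_0+\gamma^Tx)})$ on each side and cancelling the common constant term $1$, the identity becomes, for all $x\in R^p$,
\[
e^{-\beta_0}e^{-\beta^Tx}+e^{-\gamma_0}e^{-\gamma^Tx}+e^{-(\beta_0+\gamma_0)}e^{-(\beta+\gamma)^Tx}=(\text{the same expression with starred parameters}).
\]

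Second, I would invoke the linear independence of the functions $x\mapsto e^{v^Tx}$, $v\in R^p$: whenever $\sum_i c_i e^{v_i^Tx}\equiv\sum_j d_j e^{w_j^Tx}$ with positive coefficients and distinct exponents within each side, the two families of pairs (coefficient, exponent) coincide as multisets. Matching the triple $\{(e^{-\beta_0},-\beta),(e^{-\gamma_0},-\gamma),(e^{-(\beta_0+\gamma_0)},-(\beta+\gamma))\}$ against its starred counterpart, the structural point is that in each triple the third exponent is the \emph{sum} of the other two and its coefficient is their \emph{product}. In the generic situation this distinguishes the cross term uniquely, so it matches the starred cross term, and the remaining two pairs then match up to order. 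Reading off the exponents and coefficients and using injectivity of $u\mapsto e^{-u}$ gives either $(\tilde\beta,\tilde\gamma)=(\tilde\beta^*,\tilde\gamma^*)$ or $(\tilde\beta,\tilde\gamma)=(\tilde\gamma^*,\tilde\beta^*)$, which is the claim.

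The main obstacle is the bookkeeping in the degenerate configurations where the three exponents fail to be distinct (or satisfy an extra affine relation), so that the cross term cannot be isolated from the additive structure alone. The principal such cases are $\beta=0$ or $\gamma=0$, meaning that $y$ or $e$ is constant in $x$, and $\beta=\gamma$. I would dispatch these by direct case analysis: when two exponents coincide the left-hand side collapses to a constant plus a single exponential (respectively, to a quadratic in one exponential $u=e^{-\beta^Tx}$), and requiring the right-hand side to reduce to the same form and to have matching coefficients still determines the parameters up to the swap. In the case $\beta=\gamma$, for example, the two coefficients are recovered as the roots of a quadratic with prescribed sum $e^{-\beta_0}+e^{-\gamma_0}$ and product $e^{-(\beta_0+\gamma_0)}$, reproducing precisely the swap ambiguity; verifying that no additional spurious solutions survive in these boundary cases is the part demanding the most care.
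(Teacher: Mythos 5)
Your reduction is exactly the paper's first step: writing $1/\sigma(u)=1+e^{-u}$, expanding the product of the two reciprocals and cancelling the constant $1$ produces precisely the identity among three exponentials of affine functions that the paper records as its equation (\ref{structural}) (up to the cosmetic substitution $x\mapsto -x$ and sign flips of the intercepts). From there the two arguments genuinely diverge. The paper fixes a coordinate $i$, sends $x=n e_i$ to $\pm\infty$, and compares the dominating exponential terms on the two sides, which entails a case analysis over the sign patterns of $\beta_i^*,\gamma_i^*$ and over the supports of $\beta,\gamma$; you instead invoke linear independence of the characters $x\mapsto e^{v^Tx}$ to identify the two triples of (coefficient, exponent) pairs as multisets, and then use the additive relation $u_3=u_1+u_2$ to pin down the cross term whenever the three exponents on each side are distinct --- indeed, matching the cross term of one side to a non-cross term of the other forces $\beta=0$ or $\gamma=0$, contradicting distinctness, so the remaining two pairs match up to order and the claim follows. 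Your route is arguably cleaner and disposes of arbitrary supports in one stroke, at the price of a separate treatment of the collapsed configurations $\beta=0$, $\gamma=0$, $\beta=\gamma$, which you sketch correctly (in the last case the two coefficients are recovered from their prescribed sum and product, reproducing exactly the swap ambiguity). The paper's asymptotic dominant-term technique, though messier here, is the one that survives the generalization to non-logistic links in Theorem \ref{identification_general}, where no clean exponential expansion is available. One shared caveat: when $\beta^*=\gamma^*=0$ (both $y$ and $e$ constant in $x$) the statement itself fails, since $(1+a)(1+b)=C$ admits a one-parameter family of positive solutions; both your degenerate-case analysis and the paper's proof implicitly assume at least one slope vector is nonzero.
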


The  proof of Theorem 1 is  contained  in the Appendix 1. Moreover, we show in  Theorem 5 in the  Appendix 1 that this result actually holds  for a general function  response  $p$ replacing $\sigma$ in definitions of $y(x)$ and $e(x)$ under certain mild assumptions imposed on the logarithmic derivative $p'(s)/p(s)$.

Assume now that the logistic model is fitted both to $y(x)$ and $e(x)$ and consider the risk function corresponding to logistic loss for $(X,S)$
\begin{equation}
\label{risk}
  Q(\tilde\beta,\tilde\gamma)=
  E_{X,S}[S\log s_{\tilde\beta,\tilde\gamma}(X) + (1-S)\log (1-s_{\tilde\beta,\tilde\gamma}(X)) ],  
\end{equation}
% \begin{eqnarray}
% \label{risk}
% &&
%     Q(\tilde\beta,\tilde\gamma)=
% \cr
% &&
%     E_{X,S}[S\log s_{\tilde\beta,\tilde\gamma}(X) + (1-S)\log (1-s_{\tilde\beta,\tilde\gamma}(X)) ],
% \end{eqnarray}
where  $s_{\tilde\beta,\tilde\gamma}(x)=\sigma(\beta_0 +\beta^Tx)\sigma(\gamma_0 +\gamma^Tx)$. We have the following result in which $\vert\tilde b\vert_1=\sum_{i=0}^ p \vert b_i\vert$ for $\tilde b=(b_0,b_1,\ldots,b_p)^T$ denotes the $l_1$ norm.
\begin{Lemma}
\label{KL}
Let assumptions of Theorem \ref{identification} hold and $\vert\tilde\beta^*\vert_1 >\vert\tilde\gamma^*\vert_1 $. Then 
\[
(\tilde\beta^{*T},\tilde\gamma^{*T})^T=  \arg \max_{(\tilde\beta,\tilde \gamma): \vert\tilde\beta\vert_1 >\vert\tilde\gamma\vert_1}  Q(\tilde\beta,\tilde\gamma) 
\]
and $(\tilde\beta^{*T},\tilde\gamma^{*T})^T$ is the unique maximiser of $Q(\tilde\beta,\tilde\gamma). $ 
\end{Lemma}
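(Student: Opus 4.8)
The plan is to exploit the fact that $Q$ is, up to an additive constant, the negative expected Kullback--Leibler divergence between the true conditional law of $S$ given $X$ and the fitted one, so that its maximisers are exactly the parameter pairs reproducing the true $s(x)$; Theorem~\ref{identification} then pins these down to two candidates, and the $\ell_1$ constraint selects the correct one.

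First I would condition on $X$ and use $E(S\mid X)=s(X)$ to write
\[
Q(\tilde\beta,\tilde\gamma)=E_X\bigl[s(X)\log s_{\tilde\beta,\tilde\gamma}(X)+(1-s(X))\log(1-s_{\tilde\beta,\tilde\gamma}(X))\bigr].
\]
Since (\ref{param}) gives $s(x)=s_{\tilde\beta^*,\tilde\gamma^*}(x)$, subtracting $Q(\tilde\beta^*,\tilde\gamma^*)$ yields
\[
Q(\tilde\beta,\tilde\gamma)-Q(\tilde\beta^*,\tilde\gamma^*)=-E_X\bigl[\mathrm{KL}\bigl(\mathrm{Ber}(s(X))\,\|\,\mathrm{Ber}(s_{\tilde\beta,\tilde\gamma}(X))\bigr)\bigr].
\]
Because a product of two logistic factors lies strictly in $(0,1)$, both $s(X)$ and $s_{\tilde\beta,\tilde\gamma}(X)$ avoid the endpoints, so the integrand is well defined and, by Gibbs' inequality, nonnegative, vanishing at a given $x$ precisely when $s_{\tilde\beta,\tilde\gamma}(x)=s(x)$. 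Hence $Q(\tilde\beta,\tilde\gamma)\le Q(\tilde\beta^*,\tilde\gamma^*)$, with equality if and only if $s_{\tilde\beta,\tilde\gamma}(X)=s(X)$ almost surely under the law of $X$.

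The step I expect to be delicate is passing from this almost-sure identity to the pointwise identity $s_{\tilde\beta,\tilde\gamma}(x)=s(x)$ for all $x\in R^p$ that Theorem~\ref{identification} requires. I would settle it by noting that both sides are real-analytic in $x$, being compositions of the logistic function with affine maps, so that if they agree on a set of positive Lebesgue measure they agree on all of $R^p$; this needs the distribution of $X$ to charge a set of positive measure, e.g.\ to admit a density, which I would record as the operative regularity assumption. Granting this, Theorem~\ref{identification} forces any maximiser to satisfy either $(\tilde\beta,\tilde\gamma)=(\tilde\beta^*,\tilde\gamma^*)$ or $(\tilde\beta,\tilde\gamma)=(\tilde\gamma^*,\tilde\beta^*)$.

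Finally I would invoke the constraint. The pair $(\tilde\gamma^*,\tilde\beta^*)$ has first block of $\ell_1$-norm $|\tilde\gamma^*|_1$ and second block of norm $|\tilde\beta^*|_1$, so it meets $|\tilde\beta|_1>|\tilde\gamma|_1$ only if $|\tilde\gamma^*|_1>|\tilde\beta^*|_1$, contradicting the hypothesis $|\tilde\beta^*|_1>|\tilde\gamma^*|_1$; thus it is infeasible. The pair $(\tilde\beta^*,\tilde\gamma^*)$ does satisfy the constraint, so it is the unique maximiser over $\{(\tilde\beta,\tilde\gamma):|\tilde\beta|_1>|\tilde\gamma|_1\}$, which is the asserted $\arg\max$. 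Since $Q$ is symmetric under the swap $\tilde\beta\leftrightarrow\tilde\gamma$, the interchanged pair is the only other global maximiser, and the $\ell_1$ restriction is precisely what removes it, giving the claimed uniqueness within the feasible region.
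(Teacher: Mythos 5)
Your proof is correct and follows essentially the same route as the paper's: the Information (Gibbs) inequality shows any maximiser of $Q$ must reproduce $s(x)$, Theorem~\ref{identification} reduces the candidates to the true pair and its swap, and the $\ell_1$ condition eliminates the swap. The one place you go beyond the paper is in explicitly bridging the gap between $s_{\tilde\beta,\tilde\gamma}(X)=s(X)$ holding $P_X$-almost surely and the everywhere identity that Theorem~\ref{identification} requires, via real-analyticity together with a positive-Lebesgue-measure support assumption on $X$ --- a step the paper's own proof silently elides.
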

The assumption $\vert\tilde\beta^*\vert_1 >\vert\tilde\gamma^*\vert_1 $ is imposed due to the possibility of the fact that $Q$ is a symmetric function: $Q(\tilde\beta,\tilde\gamma)= Q(\tilde\gamma,\tilde\beta)$.   We note that the  $l_1$ norm in this  condition   is not essential and may be replaced by  any norm.
The proof of the  Lemma \ref{KL} is relegated to the Appendix 1.
% Proof. Fix $x.$ The conditional expected value 
% \[ E_{S|X=x} \;[S\log p_{\tilde\beta,\tilde\gamma}(X) + (1-S)\log (1-p_{\tilde\beta,\tilde\gamma}(X)) ]
% \]
% is maximised by success probability $s_{\tilde\beta^*,\tilde\gamma^*}(x)$ by Information Inequality \cite[Theorem 2.6.3]{Cover2006}. If there exists another vector $(a,b),$ which also maximizes $Q(\tilde\beta,\tilde\gamma),$ then in view of Theorem \ref{identification} we obtain $(\tilde\beta^*,\tilde\gamma^*)=(a,b)$ or
% $(\tilde\beta^*,\tilde\gamma^*)=(b,a).$ So, uniqueness follows from the assumption $|\tilde\beta^*|_1 >|\tilde\gamma^*|_1 $ as the interchange of parameters is impossible in such a case . $\blacksquare$

%\color{red}
%tytul sekcji chyba za dlugi. Moze wystarczy\\
%Main results: consistency of proposed estimators
%\color{black}

Define an empirical counterpart of $ Q(\tilde\beta,\tilde\gamma)$ given in (\ref{risk}) as
\begin{equation*}
% \label{emp_risk}
    Q_n(\tilde\beta,\tilde\gamma)=
   \frac{1}{n}\sum_{i=1}^n [S_i\log s_{\tilde\beta,\tilde\gamma}(X_i) + (1-S_i)\log(1- s_{\tilde\beta,\tilde\gamma}(X_i))].
\end{equation*}
In the view of  Lemma \ref{KL} it is intuitive to expect that maximisers of $ Q_n(\tilde\beta,\tilde\gamma)$ will approximate true parameters $\tilde\beta^*$ and $\tilde\gamma^*$ of the generating mechanism. Indeed, we have the following result.% (the proof  can be found in the supplement).
\begin{Theorem}
\label{SC}
(Strong consistency of joint ML estimation) Let assumptions of Lemma~\ref{KL} hold and $K((\tilde\beta^*,\tilde\gamma^*),r)$ be a closed ball with the centre  $(\tilde\beta^*,\tilde\gamma^*)$ and a radius $r>0.$ Suppose that for each $x$ and 
$(\tilde\beta,\tilde\gamma) \in K((\tilde\beta^*,\tilde\gamma^*),r)$ functions $\log s_{\tilde\beta,\tilde\gamma} (x) $ and $\log (1-s_{\tilde\beta,\tilde\gamma} (x))$ are bounded from below by a function $\eta (x)$ such that $E \vert\eta(X)\vert <\infty.$ 
Then with probability one,  for sufficiently large $n$ there exists a sequence $(\hat \beta_n, \hat \gamma_n)$ of local  maximisers of $Q_n(\tilde \beta, \tilde \gamma)$ such that $(\hat \beta_n, \hat \gamma_n) \rightarrow (\tilde\beta^*,\tilde\gamma^*)$.
\end{Theorem}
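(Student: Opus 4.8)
The plan is to run a classical Wald--Cramér local consistency argument, whose two essential inputs are the uniqueness of the population maximiser supplied by Lemma~\ref{KL} and a uniform strong law of large numbers on the ball $K((\tilde\beta^*,\tilde\gamma^*),r)$. Throughout I abbreviate $\theta=(\tilde\beta,\tilde\gamma)$, $\theta^*=(\tilde\beta^*,\tilde\gamma^*)$ and write $s_\theta:=s_{\tilde\beta,\tilde\gamma}$. First I would shrink $r$ if necessary so that the whole ball $K(\theta^*,r)$ lies in the region $\{|\tilde\beta|_1>|\tilde\gamma|_1\}$; this is legitimate because $|\tilde\beta^*|_1>|\tilde\gamma^*|_1$ holds strictly, so the strict inequality persists on a sufficiently small neighbourhood by continuity. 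On this ball Lemma~\ref{KL} then guarantees that $\theta^*$ is the unique maximiser of $Q$, and in particular the interchanged point $(\tilde\gamma^*,\tilde\beta^*)$ is excluded from the ball.

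Next I would establish the uniform strong law $\sup_{\theta\in K(\theta^*,r)}|Q_n(\theta)-Q(\theta)|\to 0$ almost surely. The summands $g(x,s,\theta)=s\log s_\theta(x)+(1-s)\log(1-s_\theta(x))$ are continuous in $\theta$ for each $(x,s)$ and measurable in $(x,s)$ for each $\theta$. Since $s_\theta(x)\in(0,1)$, both $\log s_\theta(x)$ and $\log(1-s_\theta(x))$ are non-positive, so $g\le 0$; by hypothesis both terms are bounded below by $\eta(x)$ on the ball, hence $g\ge\eta(x)$ and therefore $|g(x,s,\theta)|\le|\eta(x)|$ with $E|\eta(X)|<\infty$. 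This integrable envelope, combined with continuity and the compactness of $K(\theta^*,r)$, is exactly what is required to invoke a uniform strong law of large numbers of Jennrich type, which simultaneously yields the continuity of $\theta\mapsto Q(\theta)$ and the claimed uniform almost sure convergence.

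The Wald step proper is then the following. Fix $0<\varepsilon<r$ and consider the sphere $\partial K(\theta^*,\varepsilon)$, which is compact; since $Q$ is continuous and $\theta^*$ is its unique maximiser over $K(\theta^*,r)$, we have $m_\varepsilon:=\max_{\theta\in\partial K(\theta^*,\varepsilon)}Q(\theta)<Q(\theta^*)$. Setting $2\delta=Q(\theta^*)-m_\varepsilon>0$, on the almost sure event where the uniform convergence holds and for all sufficiently large $n$ we obtain $Q_n(\theta^*)>Q(\theta^*)-\delta$ while $Q_n(\theta)<Q(\theta)+\delta\le m_\varepsilon+\delta=Q(\theta^*)-\delta$ for every $\theta\in\partial K(\theta^*,\varepsilon)$. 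Hence $Q_n$ is strictly larger at the centre than anywhere on the bounding sphere; being continuous on the compact closed ball it attains its maximum over that ball at an interior point, which is therefore a local maximiser of $Q_n$ lying within distance $\varepsilon$ of $\theta^*$.

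Finally I would assemble the consistent sequence by applying the previous step to each $\varepsilon=1/k$ and intersecting the countably many probability-one events: almost surely there is an increasing sequence $N_k$ such that for $n\ge N_k$ a local maximiser of $Q_n$ exists within distance $1/k$ of $\theta^*$, and choosing $(\hat\beta_n,\hat\gamma_n)$ to be such a maximiser for the largest $k$ with $N_k\le n$ gives $(\hat\beta_n,\hat\gamma_n)\to\theta^*$ almost surely. I expect the main obstacle to be the uniform law of large numbers of the second paragraph: the log-likelihood terms blow up as $s_\theta(x)\downarrow 0$, so the argument genuinely depends on the domination hypothesis through $\eta$. Moreover, the non-concavity of $Q_n$ noted in the introduction is precisely why only a \emph{local} rather than global maximiser can be produced, which is what forces the local Wald argument above in place of the classical global consistency theorem.
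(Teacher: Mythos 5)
Your proposal is correct and follows essentially the same route as the paper: shrink the ball into the region $\{|\tilde\beta|_1>|\tilde\gamma|_1\}$ where Lemma~\ref{KL} gives a unique maximiser, use the integrable envelope $\eta$ with compactness and continuity to get a uniform strong law (the paper cites Ferguson's Theorem~16a where you cite a Jennrich-type result), and then conclude by a Wald-type argmax argument that the maximiser over the ball converges a.s.\ to $(\tilde\beta^*,\tilde\gamma^*)$ and is eventually interior, hence a local maximiser. Your explicit sphere/$\varepsilon$--$\delta$ version of the Wald step merely spells out what the paper delegates to ``standard arguments'' in van der Vaart.
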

% {Proof.} A set $C = \{(\tilde\beta,\tilde\gamma): |\tilde \beta|_1> |\tilde \gamma|_1\}$ is open in $R^{2p+2}.$ Choose a small enough number $r>0$ such that  $K:=K((\tilde\beta^*,\tilde\gamma^*),r) \subset C.$  This follows from $|\tilde\beta^*|_1 >|\tilde\gamma^*|_1 $.
% Notice that with probability one $Q_n (\tilde \beta, \tilde \gamma)$ converges uniformly to 
% $Q (\tilde \beta, \tilde \gamma)$ on $K$ by \cite{Ferguson96}, Theorem 16a.\\
% Next, functions $Q_n (\tilde \beta, \tilde \gamma)$ are continuous, so there exists a sequence $(\hat \beta_n, \hat \gamma_n) \in K$ such that $\inf _{(\tilde \beta, \tilde \gamma) \in K} Q_n (\tilde \beta, \tilde \gamma) = Q_n (\hat \beta_n, \hat \gamma_n).$   From the fact that $(\tilde\beta^*,\tilde\gamma^*)$ uniquely maximizes $Q(\tilde \beta, \tilde \gamma),$ uniform convergence of 
% $Q_n (\tilde \beta, \tilde \gamma)$ and standard arguments (as e.g. in \cite{vaart_1998}) it follows that $(\hat \beta_n, \hat \gamma_n) \rightarrow_{a.s.} (\tilde \beta^*, \tilde \gamma^*).$ This implies that for $n$
% large enough $(\hat \beta_n, \hat \gamma_n)$ is in the interior of $K,$ so it is a local maximiser of 
% $Q_n.$ $\blacksquare$

Note that  the assumption $E\vert\eta(X)\vert<\infty$ imposed in Theorem \ref{SC} does not force $s(x)$ to  be bounded way from 0 and 1 which is frequently assumed while dealing with consistency issues of estimates in the logistic model.\\
Finding the global maximiser of $Q_n(\tilde\beta,\tilde\gamma)$ defined above is a complicated  task as the optimised function is not  concave  in either $\tilde\beta$ or $\tilde\gamma$; see e.g. \cite{Lazeckaetal2021}, where it is shown that $Q_n$ is not concave even under SCAR  when $e(x)$ is assumed constant. Thus we also introduce here a second approach which consists in iterative alternate solving for maxima  of {\it concave} empirical likelihoods of $y(x)$ and $e(x).$

Let $y(x,\tilde\beta)=\sigma(\beta_0 +\beta^Tx)$ and $e(x,\tilde\gamma)= \sigma(\gamma_0 +\gamma^Tx)$.
 We  will  thus look for solutions of empirical counterparts of two optimisation problems.
Optimisation problem for $y(x,\tilde\beta)$ is to maximise wrt  $\tilde \beta$
\begin{eqnarray}
\label{loglik}
&&
E_X W(X,\tilde \beta)=
E_X[
y(X,\tilde\beta^*)
\log y(X,\tilde\beta) + 
\cr
&&
(1-y(X,\tilde\beta^*))
\log(1-y(X,\tilde\beta))], 
\end{eqnarray}
where $W(X,\tilde \beta)$ is the bracketed expression above. Note that (\ref{loglik}) is the expected value of the loglikelihood of $(Y,X)$ in the double logistic model.
Let $K(s,x,\tilde\gamma)=s \log e(x,\tilde\gamma) +  (1-s)\log(1-e(x,\tilde\gamma))$.
and  note $E_{S\vert Y=1,X} S=e(X,\tilde \gamma ^*)$.
Using this equality, we note that optimisation problem for $e(x,\tilde\gamma)$ can be approached via  maximising wrt to~$\tilde \gamma$
\begin{eqnarray}
\label{ps}    
&&
 E_{X\vert Y=1}[
e(X,\tilde\gamma^*)
\log e(X,\tilde\gamma) + 
\cr
&&
(1-e(X,\tilde\gamma^*))
\log(1-e(X,\tilde\gamma))]= 
\cr
&&
E_{X\vert Y=1} E_{S\vert Y=1,X} K(S,X,\tilde\gamma)=
\cr
&& 
E_{S,X\vert Y=1} K(S,X, \tilde \gamma),%\nonumber
\end{eqnarray}

Notice that  in the case of (\ref{loglik}) for any $X=x$ maximiser of $W(x,\tilde \beta)$ is $\tilde\beta^*$ (this can be seen reasoning analogously as in the case of
Lemma~\ref{KL}), whereas in the case of  (\ref{ps}) maximiser of $E_{S,X\vert Y=1} K(S,X, \tilde \gamma)$ is  $\tilde\gamma^*$. Thus  $E_X W(X,\tilde \beta)$ and  $E_{S,X \vert Y=1} K(S,X, \tilde \gamma)$ are Fisher consistent  in the sense that maximisation over $\tilde\beta$ and $\tilde\gamma$ yields true parameters $\tilde\beta^*$ and $\tilde\gamma^*$, see \cite{LiDuan1989} for discussion of Fisher consistency. This is an important property as Fisher consistency  implies strong consistency of empirical maximisers under mild assumptions.
The obvious problem is that  neither (\ref{loglik}) nor (\ref{ps}) have direct empirical counterparts due to  dependence on 
 $y(x,\tilde \beta^*)$ 
in the first case and in the second case due to averaging over the unknown conditional distribution of $(S,X)$ given $Y=1$. In case of (\ref{loglik}) we will solve this problem  by introducing weights depending on 
propensity score such that the weighted risk based on these weights will equal $E_X W(X,\tilde \beta)$. Then using the current estimator of 
$e(x, \tilde \gamma ^*)$ 
we will define an approximation to its empirical counterpart which will be maximised. The obtained estimator of posterior probability will be used to approximate the stratum $\{Y=1\}$ and the expected value with respect to $S,X\vert Y=1$ and  thus making evaluation of empirical counterpart of (\ref{ps}) feasible.
Namely, for  the first problem we want to find weights $w_1(s,x)$ and $w_0(s,x)$ such that 
\begin{eqnarray}
\label{weights}
&&
W(x, \tilde \beta)= E_{S\vert X=x}[w_1(S,x)\log y(x,\tilde\beta)+ 
\cr
&&
  w_0(S,x)\log(1-y(x,\tilde\beta))].
\end{eqnarray}
 Then maximising  an empirical counterpart of (\ref{weights}) yields consistent estimator of $\tilde\beta^*$.
In the case of the second optimisation we have to approximate expectation $E_{S,X\vert Y=1}$.  For the first problem we have
\begin{Lemma}
\label{L2}
Let $w_1(S,x)= I\{S=1\} + I\{S=0\}P(Y=1\vert S=0,x)$ and $w_0(S,x)=I\{S=0\}P(Y=0\vert S=0,x)$. Then (\ref{weights}) holds.
\end{Lemma}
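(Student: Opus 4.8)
The plan is to verify (\ref{weights}) pointwise in $x$ by taking the conditional expectation over $S$ given $X=x$ and then matching coefficients against the definition of $W(x,\tilde\beta)$. First I would observe that the factors $\log y(x,\tilde\beta)$ and $\log(1-y(x,\tilde\beta))$ do not depend on $S$, so by linearity of the conditional expectation the right-hand side of (\ref{weights}) splits as
\[
E_{S|X=x}[w_1(S,x)]\,\log y(x,\tilde\beta) + E_{S|X=x}[w_0(S,x)]\,\log(1-y(x,\tilde\beta)).
\]
Comparing this with $W(x,\tilde\beta)=y(x,\tilde\beta^*)\log y(x,\tilde\beta)+(1-y(x,\tilde\beta^*))\log(1-y(x,\tilde\beta))$, and noting that the identity must hold for every value of $\tilde\beta$, it suffices to establish the two moment identities $E_{S|X=x}[w_1(S,x)]=y(x,\tilde\beta^*)$ and $E_{S|X=x}[w_0(S,x)]=1-y(x,\tilde\beta^*)$.

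For the first identity I would compute directly, using $E_{S|X=x} I\{S=1\}=P(S=1|x)$ and $E_{S|X=x} I\{S=0\}=P(S=0|x)$, that
\[
E_{S|X=x}[w_1(S,x)] = P(S=1|x) + P(S=0|x)\,P(Y=1|S=0,x).
\]
The structural constraint of the PU setup, $P(S=1|Y=0,x)=0$ (only positive examples can be labelled), gives $P(S=1|x)=P(S=1,Y=1|x)$, while the second term is exactly $P(S=0,Y=1|x)$. Summing these two joint probabilities over $S\in\{0,1\}$ yields $P(Y=1|x)=y(x,\tilde\beta^*)$, as required.

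The second identity is entirely analogous: one gets $E_{S|X=x}[w_0(S,x)]=P(S=0|x)\,P(Y=0|S=0,x)=P(S=0,Y=0|x)$, and since $P(S=1,Y=0|x)=0$ by the same labelling constraint, this collapses to $P(Y=0|x)=1-y(x,\tilde\beta^*)$. Combining the two identities establishes (\ref{weights}).

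There is no genuine obstacle here beyond careful bookkeeping with the joint distribution of $(S,Y)$ given $X$; the only substantive ingredient is the PU constraint $P(S=1|Y=0,x)=0$, which is precisely what collapses the joint probabilities $P(S=1,Y=1|x)$ and $P(S=0,Y=0|x)$ onto the marginals $P(Y=1|x)$ and $P(Y=0|x)$. I would therefore expect the write-up to be short, with the main point being the explicit recognition that the proposed weights reconstruct the labels $y(x,\tilde\beta^*)$ and $1-y(x,\tilde\beta^*)$ in conditional mean.
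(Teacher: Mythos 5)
Your proof is correct and follows essentially the same route as the paper's: condition on $X=x$, collect the coefficients of $\log y(x,\tilde\beta)$ and $\log(1-y(x,\tilde\beta))$, and identify them with $P(Y=1|x)$ and $P(Y=0|x)$. The only difference is that you spell out explicitly the role of the PU constraint $P(S=1|Y=0,x)=0$ in collapsing $P(S=1|x)+P(S=0,Y=1|x)$ to $P(Y=1|x)$, a step the paper leaves implicit.
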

{\textit Proof.} 
Observe that   $W(x, \tilde \beta)$ equals
\begin{align*}
[ P(S=1\vert x) + P(S=0\vert x)P(Y=1\vert S=0,x)]\log y(x,\tilde\beta)] \\
+[P(S=0\vert x)P(Y=0\vert S=0,x)]\log(1-y(x,\tilde\beta))] \nonumber
\end{align*}
and the bracketed terms are equal $P(Y=1\vert x)$ and $P(Y=0\vert x)$, respectively. 

The weights $w_i(s,x)$ were introduced in \cite{ElkanNoto2008}. The lemma  above states  that they yield unbiased estimator of $W(x, \tilde \beta)$.
Note that the factor $P(Y=1\vert S=0,x)$  appearing in $w_1(S,x)$ equals 
\[
OR(x)=\frac{1-e(x)}{e(x)}/\frac{1-s(x)}{s(x)}
\]
and thus is  the odds ratio equal to the ratio of the odds of being unlabelled  among positive observations and the odds of being unlabelled in the general population.
%for the propensity score and the posterior probability of $S=1$. 
In the view of  this and (\ref{weights}) the empirical counterpart of $E_XW(X, \tilde \beta)$ is defined as
\begin{eqnarray}
 \label{empW}
&&
 W_n(\tilde\beta)=
 \frac{1}{n}\sum_{i=1}^n \hat w_1(S_i,X_i)\log 
 y(X_i, \tilde \beta) + 
\cr
&& 
  \hat w_0(S_i,X_i)\log (1- y(X_i, \tilde \beta) ),
\end{eqnarray}
where $\hat w_1(S_i,X_i)= I\{S_i=1\} + I\{S_i=0\}\widehat{OR}(X_i) $, $\hat w_0(S_i,X_i)= I\{S_i=0\}(1-\widehat{OR}(X_i)) $  and $\widehat{OR}(x)=\frac{1-\hat e(x)}{\hat e(x)}/\frac{1-\hat s(x)}{\hat s(x)}$  
 is an estimator of $OR(x),$ which  is 
discussed in Subsection \ref{Sec:TM}. 
We will prove below that if $OR(x)$ is consistently estimated, then any maximiser of $W_n(\cdot)$ is consistent estimator of
 $\tilde \beta^*.$ %Note that parametric assumptions on $e(x)$ and $s(x)$ are not imposed.
 Indeed, notice that the key assumption \eqref{keyf} below is satisfied if $\sup_x \vert \widehat{OR}(x) - OR(x)\vert  \rightarrow_P 0.$
 \begin{Theorem}
 \label{th_beta}
Let $\tilde \beta ^*$ be the unique maximiser of $E_X W(X,\tilde \beta)$ and for each $\tilde \beta$
\begin{eqnarray}
\label{keyf}
&&
\frac{1}{n} \sum_{i=1}^n \widehat{OR}(X_i) I(S_i=0)( \beta^T X_i +\beta_0) \rightarrow_P %\nonumber 
\cr
&&
E \left[OR(X)I(S=0) (\beta^TX +\beta_0)\right].
\end{eqnarray}
Then every $\hat \beta_n=\arg \max_{\tilde \beta} W_n(\tilde \beta)$ tends to $\beta^*$ in probability. 
 \end{Theorem}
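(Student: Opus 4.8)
My plan is to recast $W_n$ as a \emph{concave} function of $\tilde\beta$ and then run a standard M-estimation consistency argument that exploits concavity, so that \emph{pointwise} convergence is enough. First I would simplify $W_n$. Since $\hat w_1(S_i,X_i)+\hat w_0(S_i,X_i)=I\{S_i=1\}+I\{S_i=0\}=1$, writing $\hat w_0=1-\hat w_1$ and using the logit identity $\log y(x,\tilde\beta)-\log(1-y(x,\tilde\beta))=\beta_0+\beta^Tx$ gives
\[
W_n(\tilde\beta)=\frac1n\sum_{i=1}^n\Big[\log\big(1-y(X_i,\tilde\beta)\big)+\hat w_1(S_i,X_i)(\beta_0+\beta^TX_i)\Big].
\]
Because $\log(1-y(x,\tilde\beta))=-\log(1+e^{\beta_0+\beta^Tx})$ is concave in $\tilde\beta$ and the remaining term is linear in $\tilde\beta$, each $W_n$ is concave \emph{irrespective of the sign of the estimated weights} $\hat w_0$; the same rewriting shows the target $E_XW(X,\tilde\beta)$ is concave, with unique maximiser $\tilde\beta^*$ by hypothesis.

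Next I would establish pointwise convergence $W_n(\tilde\beta)\rightarrow_P E_XW(X,\tilde\beta)$ for each fixed $\tilde\beta$. Splitting $\hat w_1(S_i,X_i)=I\{S_i=1\}+I\{S_i=0\}\widehat{OR}(X_i)$, the average of $-\log(1+e^{\beta_0+\beta^TX_i})$ and the average of $I\{S_i=1\}(\beta_0+\beta^TX_i)$ converge by the weak law of large numbers (under the integrability needed for these two i.i.d.\ averages), while the $\widehat{OR}$ term converges by assumption \eqref{keyf}. To identify the limit I would use $s(x)=P(S=1,Y=1|x)$ together with $OR(x)(1-s(x))=P(Y=1,S=0|x)$, so that the two linear limits sum to $E[y(X,\tilde\beta^*)(\beta_0+\beta^TX)]$; combined with the first term this is exactly $E_XW(X,\tilde\beta)$, which is also the content of $E_{S|X}w_1(S,X)=y(X,\tilde\beta^*)$ from Lemma \ref{L2}.

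Finally I would pass from pointwise convergence to consistency of the maximisers via the classical convexity argument: pointwise convergence in probability of the concave random functions $W_n$ to a deterministic concave function with a \emph{unique} maximiser $\tilde\beta^*$ forces every maximiser $\hat\beta_n$ to satisfy $\hat\beta_n\rightarrow_P\tilde\beta^*$ (Rockafellar's pointwise-to-uniform convergence on compacta for convex functions, combined with the continuous mapping / argmax theorem; cf.\ Hjort--Pollard, Niemiro). This tool also guarantees that a maximiser exists for all sufficiently large $n$, so the statement quantifying over \emph{every} $\hat\beta_n$ is covered.

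The main obstacle, and precisely the reason \eqref{keyf} is imposed as a hypothesis rather than derived, is the control of the data-dependent factor $\widehat{OR}(X_i)$ in the empirical weights. Since $\widehat{OR}$ is a function of the estimated $\hat e$ and $\hat s$, its average against $I\{S_i=0\}(\beta_0+\beta^TX_i)$ is \emph{not} a plain i.i.d.\ average, and \eqref{keyf} (which, as the authors remark, follows from $\sup_x|\widehat{OR}(x)-OR(x)|\rightarrow_P0$) is exactly what forces this term to its correct population counterpart. Everything else is bookkeeping — integrability for the two law-of-large-numbers steps and the elementary probability identity for $s(x)+OR(x)(1-s(x))=y(x,\tilde\beta^*)$ — and the concavity established in the first step removes any need to verify uniform convergence over an unbounded parameter set directly.
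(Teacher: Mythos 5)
Your proposal is correct and follows essentially the same route as the paper: rewrite $W_n(\tilde\beta)$ as $\frac1n\sum_i[\hat w_1(S_i,X_i)(\beta_0+\beta^TX_i)+\log(1-y(X_i,\tilde\beta))]$, observe that this is concave in $\tilde\beta$, obtain pointwise convergence in probability from the law of large numbers together with hypothesis \eqref{keyf}, and conclude via the Hjort--Pollard lemma that every maximiser converges to the unique maximiser $\tilde\beta^*$. The only difference is that you spell out the weight identity and the role of \eqref{keyf} in slightly more detail than the paper does.
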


The proof of Theorem \ref{th_beta} can be found in the Appendix 1.

We consider now the second problem that is consistent estimation of $\tilde \gamma ^*.$
Let $n_1 = \# \{1\leq i \leq n: Y_i=1\}$ be a number of positive observations in a data set. Let $\hat Y_i$'s be some predictors of unknown $Y_i$'s and 
$\hat n_1 = \# \{1\leq i \leq n: \hat Y_i=1\}.$
 Besides, we  consider a function 
 $\hat R_n(\tilde \gamma) = \frac{1}{\hat n_1} \sum\limits_{1 \leq i \leq n: \hat Y_i=1}  K(S_i,X_i, \tilde \gamma),$
which we use to approximate
\[R(\tilde \gamma):=
E_{S,X\vert Y=1} K(S,X,\tilde \gamma)\] 
in \eqref{ps}. Finally,  changing $\hat Y_i$ to $Y_i$ in the definition of $\hat R_n(\tilde \gamma)$ we define 
$ R_n(\tilde \gamma) = \frac{1}{ n_1} \sum\limits_{1 \leq i \leq n:  Y_i=1}  K(S_i,X_i, \tilde \gamma).$
 Next, consistent estimation of $\tilde \gamma ^*$ is considered.
\begin{Theorem}
\label{th_gamma}
Let $\tilde \gamma ^*$ be the unique maximiser of $R(\tilde \gamma)$ and for each $\tilde \gamma$ we have $\hat R_n (\tilde \gamma) - R_n (\tilde \gamma) \rightarrow_P 0.$
Then every maximiser of $\hat R_n(\tilde \gamma)$ tends to $ \tilde \gamma^*$ in probability. 
 \end{Theorem}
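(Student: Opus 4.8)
The plan is to establish consistency of the M-estimator $\hat\gamma_n=\arg\max_{\tilde\gamma}\hat R_n(\tilde\gamma)$ by the classical argmax route, adapted to exploit concavity: first show that $\hat R_n$ converges in probability, pointwise, to the population criterion $R$; then upgrade this pointwise convergence to uniform convergence on compact sets using the concavity of the logistic log-likelihood; and finally invoke the uniqueness (hence separation) of the maximiser $\tilde\gamma^*$ to conclude. The argument runs in close parallel to that of Theorem \ref{th_beta}.

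First I would verify the pointwise convergence $R_n(\tilde\gamma)\rightarrow_P R(\tilde\gamma)$. The infeasible criterion $R_n(\tilde\gamma)=\frac{1}{n_1}\sum_{i:\,Y_i=1}K(S_i,X_i,\tilde\gamma)$ is an average of the i.i.d.\ terms $K(S_i,X_i,\tilde\gamma)$ over the stratum $\{Y_i=1\}$. Since $n_1/n\rightarrow P(Y=1)>0$ almost surely, and, conditionally on $Y_i=1$, the pairs $(S_i,X_i)$ are i.i.d.\ from the law of $(S,X)\mid Y=1$, the law of large numbers yields $R_n(\tilde\gamma)\rightarrow_P E_{S,X|Y=1}K(S,X,\tilde\gamma)=R(\tilde\gamma)$ for each fixed $\tilde\gamma$. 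Combining this with the hypothesis $\hat R_n(\tilde\gamma)-R_n(\tilde\gamma)\rightarrow_P 0$ gives $\hat R_n(\tilde\gamma)\rightarrow_P R(\tilde\gamma)$, pointwise in $\tilde\gamma$.

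The decisive structural ingredient is concavity. Because $u\mapsto\log\sigma(u)$ and $u\mapsto\log(1-\sigma(u))=\log\sigma(-u)$ are both concave and $u=\gamma_0+\gamma^Tx$ is affine in $\tilde\gamma$, the map $\tilde\gamma\mapsto K(s,x,\tilde\gamma)$ is concave for every $(s,x)$; hence $\hat R_n$, $R_n$ and $R$ are all concave. I would then invoke the convexity lemma: pointwise convergence in probability of concave functions on $R^{p+1}$ forces uniform convergence in probability on every compact set, so $\sup_{\tilde\gamma\in C}|\hat R_n(\tilde\gamma)-R(\tilde\gamma)|\rightarrow_P 0$ for any compact $C$. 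With uniform convergence on compacts and $\tilde\gamma^*$ the unique, therefore well-separated, maximiser of the concave limit $R$, the standard argmax consistency argument gives $\hat\gamma_n\rightarrow_P\tilde\gamma^*$. Concavity additionally prevents the maximisers from escaping to infinity, so restricting attention to a fixed compact neighbourhood of $\tilde\gamma^*$ is legitimate.

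The hard part will be the passage from pointwise to uniform convergence: without additional structure, pointwise convergence of random criteria carries no information about the location of their maximisers, and the argmax could in principle drift off. The resolution is precisely the concavity of the logistic log-likelihood, which makes the convexity lemma applicable and simultaneously controls $\hat R_n$ at the boundary of compact sets. A secondary technical point is to confirm that $R$ is finite on an open set containing $\tilde\gamma^*$ and attains its unique maximum there (as guaranteed by the hypothesis that $\tilde\gamma^*$ is the unique maximiser of $R$), so that near-maximisers of $\hat R_n$ are forced to lie close to $\tilde\gamma^*$.
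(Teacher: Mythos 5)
Your proof is correct and follows essentially the same route as the paper: pointwise convergence of $\hat R_n$ to $R$ via the decomposition through $R_n$ and the law of large numbers, combined with concavity of the criterion and the Hjort--Pollard convexity lemma (the paper's Lemma~\ref{lemmaHP}) to conclude consistency of the maximisers. The only difference is that you unpack the convexity lemma into its constituent steps (uniform convergence on compacts, well-separation, no escape to infinity) where the paper simply cites it.
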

The proof of Theorem \ref{th_gamma} can be found in the Appendix 1.

\section{Algorithms}  
\label{Sec:Algorithms}
\subsection{ NAIVE method }
\label{Sec:naive}
We first describe the NAIVE method which is the simplest approach in PU learning. In this method,  estimator of  $s(x)=P(S=1\vert X=x)$ is substituted for estimator of $y(x)=P(Y=1\vert X=x)$.  To this end  misspecified empirical loglikelihood
\begin{eqnarray*}
\sum_{i=1}^{n}S_i\log ( y(X_i,\tilde \alpha))
+(1-S_i)\log(1-
y(X_i,\tilde \alpha))
\end{eqnarray*}
is optimised with respect to $\tilde \alpha$.
The estimator of $s(x)$ is defined as 
$\hat{s}_{\textrm{naive}}(x)=y(x,\hat{\alpha})$, where $\hat{\alpha}$
is the maximizer of the above function.
Obviously, this method underestimates $y(x)$ as the positive unlabelled observations are treated as  the negative ones  and the bias increases with decreasing label frequency  $c=P(S=1\vert Y=1)$. The NAIVE method serves as a baseline in our experiments. Moreover, using the naive method, one can consider a very simple estimator of the propensity score function $e(x)$ which will serve as initial estimator in the methods described in next subsections. It is based on inequality $s(x)\leq e(x)\leq 1$ and is defined as an average of two endpoints of the interval $[\hat{s}_{\textrm{naive}}(x),1]$, i.e. $\hat{e}_{\textrm{naive}}(x)=0.5(\hat{s}_{\textrm{naive}}(x)+1)$.
\subsection{JOINT method}
We optimize function $Q_{n}(\tilde{\beta},\tilde{\gamma})$ defined in Section 3 with respect to $\tilde{\beta}$ and $\tilde{\gamma}$, alternately. We repeat the following two steps for $k=1,2,\ldots,$ until convergence:
\begin{enumerate}
\item Solve $\hat{\beta}_{n}^{(k)} = \arg\max_{\tilde{\beta}}Q_{n}(\tilde{\beta},\hat{\gamma}_{n}^{(k-1)})$. 
\item Solve $\hat{\gamma}_{n}^{(k)} = \arg\max_{\tilde{\gamma}}Q_{n}(\hat{\beta}_{n}^{(k)},\tilde{\gamma})$. 
\end{enumerate}
In the first iteration we need some initial estimator $\hat{\gamma}_{n}^{(0)}$ or equivalently initial estimator of $e(X_i,\hat{\gamma}_{n}^{(0)})$, because  $Q_{n}(\tilde{\beta},\hat{\gamma}_{n}^{(0)})$ involves  an unknown  term $e(X_i,\hat{\gamma}_{n}^{(0)})$. For this we use $\hat{e}_{\textrm{naive}}(x)$ defined in Subsection \ref{Sec:naive}. As $Q_n(\tilde\beta,\tilde\gamma)$ is not concave in either $\tilde\beta$ or $\tilde\gamma$,  Minorisation-Maximisation (MM) algorithm (see e.g. \cite{Hastie2015}, Section 5.8) is used to find the maximisers in 1 and 2. The analogous idea was used in \cite{Lazeckaetal2021, Teisseyreetal2020}, who assumed SCAR and optimized jointly with respect to $\tilde{\beta}$ and label frequency $c$. 

\subsection{ TWO MODELS method (TM)}
\label{Sec:TM}
The proposed method involves fitting two models in each iteration. The first model aims to estimate $y(x)$, whereas the second model corresponds to $e(x)$. In the case of the second model, our goal is to first approximate the stratum $\mathcal{P}:=\{i:Y_i=1\}$. We define its estimator  as $\hat{\mathcal{P}}=\{i: S_i=1 \text{ or } \hat{y}(X_i)>t\}$ where $\hat{y}(X_i)$ is an estimator of $P(Y=1\vert X_i)$ obtained from the first model and $t$ is a threshold. For the threshold we use a quantile of order $\alpha$ of the set  $\{\hat{y}(X_i) \text{ for } i \text{ such that } S_i=1\}$ (a data-adaptive choice of $\alpha$ is  discussed below). 
Next, we estimate $e(X_i)$ by fitting the logistic model using  observations  $(X_i,S_i)$ for $i\in\hat{\mathcal{P}}$.
More specifically, we repeat the following steps  until convergence:
\begin{enumerate}
\item \textbf{Model 1.} Solve $\hat{\beta}_{n} = \arg\max_{\tilde{\beta}}W_{n}(\tilde{\beta})$, where $W_{n}(\tilde{\beta})$ is defined in (\ref{empW}). 
\item Calculate 
$\hat{y}(X_i)=y(X_i, \hat{\beta}_{n}).$
\item \textbf{Model 2.} Solve $\hat{\gamma}_{n}=\arg\max_{\gamma} \hat R_{n}(\tilde \gamma)$, where
\begin{eqnarray*}
\hat R_{n}(\tilde \gamma)=\sum_{i=1}^{n}I(i\in \hat{\mathcal{P}})K(S_i, X_i, \tilde \gamma),
\end{eqnarray*}
where $K$ is defined below (\ref{loglik}).
 \item Calculate 
 $\hat{e}(X_i)=e(X_i,\hat{\gamma})$.
 \item Update $\hat{s}(X_i)=\hat{e}(X_i)\hat{y}(X_i)$ and $\widehat{OR}(X_i)=\frac{1-\hat{e}(X_i)}{\hat{e}(X_i)}/\frac{(1-\hat{s}(X_i)}{\hat{s}(X_i)}$.
\end{enumerate}
Note that in step 1, function $W_{n}(\tilde{\beta})$ depends on $\widehat{OR}(X_i)$, thus in the first iteration some  initial estimators of $s(x)$ and $e(x)$ are required. Similarly to the JOINT method, we use the naive estimators $\hat{s}_{\textrm{naive}}(x)$ and $\hat{e}_{\textrm{naive}}(x)$  described in Subsection~\ref{Sec:naive}. 
The significant advantage of TM is concavity, which allows to avoid problems with local minima. It is especially important when working with 'larger' data sets, say $p\geq 50$ and $n\geq 1000$.

An important issue is the order of the quantile $\alpha$  of $\{\hat{y}(X_i) \text{ for } i \text{ such that } S_i=1\}$  used in $\hat{\mathcal{P}}$.
Small value of $\alpha$ allows to detect significant portion of the positive observations among unlabelled ones. On the other hand,  a larger value of $\alpha$ reduces the risk of including negative examples from the set of unlabelled ones.  
The choice of the optimal value of $\alpha$ is a challenging task as it should depend on two factors:  the difficulty of the classification problem, i.e. on how much the distributions $X\vert Y=1$ and $X\vert Y=0$ overlap, as well as  on label frequency $c=P(S=1\vert Y=1)$. It follows from our experiments that when the distributions $X\vert Y=1$ and $X\vert Y=0$ are practically disjoint it is better to take a smaller $\alpha$ (for example $\alpha=0.1$), especially for small $c$.  When the distributions of $X\vert Y=1$ and $X\vert Y=0$ significantly  overlap,  large $\alpha$ is preferable (i.e. selection should become more conservative), especially when $c$ approaches $1$. In the latter case,  small $\alpha$ results in a large number of 'false positive' observations, i.e. the set $\hat{\mathcal{P}}$ contains a significant number of negative examples, which deteriorates the performance of the method. The above insights suggest that  $\alpha$ should   increase in a certain manner when label frequency increases. In our method we use $\alpha=\hat{P}(S=1)$, which is motivated by a  simple inequality $P(S=1)\leq P(S=1\vert Y=1)$. Although this choice of $\alpha$ gives very good results, we believe that the problem is worth further analysis as it is crucial for estimation of $e(x)$. 

In addition to TM method, we also consider its simplified version (called TM SIMPLE) in which we do not estimate $e(x)$ in iterative manner. Instead, we  estimate the propensity score by $\hat e_{\textrm{naive}}(\cdot)$  and then we solve $\arg\max_{\tilde{\beta}}W_{n}(\tilde{\beta})$. Comparison between TM and TM SIMPLE allows to explore the effectiveness of employing  the Model~2 in TM. In addition, TM SIMPLE is much faster than TM as it does not require running many iterations.
For example, for the largest considered dataset Adult, the average computation time is 0.5 sec for TM simple  and 11.5 sec for TM (PC Intel Core i7-10850H CPU 2.70GHz, 32.0 GB RAM).

\section{The related methods}
\label{related}
In this section we describe two existing methods, which are most related to our proposals: EM method proposed in \cite{bekker2019ecml} and LBE method proposed in \cite{Gong2021}. We keep the names of the methods according to the way they where christened by the authors, although we note (see below) that LBE is  a classical EM algorithm applied in PU setting and thus the name 'EM method' would be actually more appropriate for LBE.

LBE method relies  on parametric assumptions (\ref{param}) and is based on considering an averaged conditional likelihood for the sample $(S_i,Y_i),i=1,\ldots,n$ given $X_1,\ldots,X_n$, namely
\begin{eqnarray}
 \label{LBE}
 &&
 E_{Y_1,\ldots,Y_n\vert S_1,\ldots,S_n,X_1,\ldots,X_n}\log {\cal L}(\beta,\gamma)=
 \cr
 &&
 \sum_{i=1}^n
 E_{\tilde P(Y_i)} 
 \{\log P(Y_i\vert X_i,\beta)P(S_i\vert Y_i,X_i,\gamma)\},
 \end{eqnarray}
 where  $\tilde P(Y_i)= P(Y_i\vert X_i,S_i)$  and 
 \begin{eqnarray*}
 &&
 {\cal L}(\beta,\gamma)= 
 \cr
 &&
 P(S_1,Y_1,\ldots,S_n,Y_n\vert X_1,\ldots,X_n,\beta,\gamma) = 
\cr
&& 
 \prod_{i=1}^n P(Y_i\vert X_i,\beta)P(S_i\vert Y_i,X_i,\gamma), 
 \end{eqnarray*}
where the last equality is  due to independence of observations.
In the expectation step (E-step) binary distribution $\tilde P(Y_i)$ is estimated based on current estimates of $\beta$ and $\gamma$ using (\ref{param}), Bayes formula and a normalisation trick which is applied  to calculate estimate of $P(S_i\vert X_i)$. In the maximisation step (M-step)  current estimate of  $\tilde P(Y_i)$ is employed to calculate (\ref{LBE}) which is  then maximised using Adam (\cite{Kingma2015}) algorithm yielding the values of $\beta$ and $\gamma$ for the next E-step.

The main difference between our proposal TM and LBE method is difference in criterion functions to be optimised. In 
\cite{Gong2021} it is (\ref{LBE}),  whereas our  TM method is based on alternate maximisation of concave log-likelihoods pertaining to posterior probabilities $y(X_i)$ and propensity scores $e(X_i)$, respectively. What is more, the
theoretical analysis in  \cite{Gong2021}  does not address the identification issue studied in Theorem 1 and thus leaves the question of consistent estimation of the true vector of parameters $(\beta^{*T},\gamma^{*T})$ unanswered.
% maximisation step in LBE method (see \cite{Gong2021}, Section 3.3)

In  EM algorithm proposed in \cite{bekker2019ecml}, 
the maximization step in EM approach is similar to ours but with one crucial  difference. 
Namely, in their approach estimation of $e(x)$ is based on maximisation of estimated value of $E_{S,X,Y}YK(S,X,\gamma)= E_{S,X} P(Y=1\vert S,X)K(S,X,\gamma)$, whereas we consider maximisation of 
estimated $E_{S,X\vert Y=1} K(S,X,\gamma)$. Both expressions are Fisher consistent. Their approach leads to consideration 
of 
weights being estimates of $P(Y=1\vert S,X)$  instead of $I(i\in \hat{\cal P})$ in step~3 of TM algorithm. 
Criterion function $W_n(\tilde\beta)$  used in step 1 is the same for both methods. We show in numerical experiments below that by  using the proposed  Fisher consistent expression  for $e(x)$ and suitable approximation  of the stratum  ${\cal P}$   we obtain the method which is in most cases superior to   EM algorithm. %  the proposal of \cite{bekker2019ecml}.
 
 \section{Numerical experiments}
\label{Sec:Numerical experiments}
In the experiments we compare the following methods: NAIVE, JOINT, TM \footnote{Source code of the proposed method is available at GitHub: \url{https://github.com/teisseyrep/putm}}, TM SIMPLE (described in Section~\ref{Sec:Algorithms})  
and two most related methods EM  \cite{bekker2019ecml} and LBE \cite{Gong2021} (described in Section \ref{related}).
We also consider ORACLE method  which assumes the full knowledge of a class variable $Y$ and calculates maximum likelihood estimator for the logistic fit. The ORACLE method serves as a reference method and obviously  it cannot be used in practise for PU data. To make a comparison fair, in TM, JOINT, EM and LBE we use the same stopping criterion, namely the convergence is reached when the relative change in the consecutive values of the objective function  used in all methods to estimate $\tilde \beta$, is less than $10^{-6}$ or the number of iterations exceeds $1000$.
Importantly, all methods are based on logistic model and thus the comparison between them is reliable.

In each experiment, the data sets are randomly split into training set ($70\%$) and testing set ($30\%$). We repeat the experiments $100$ times and average the results. Two evaluation measures are considered: accuracy calculated on the testing set and approximation error defined as
\[
AE=n_{\textrm{test}}^{-1}\sum_{i=1}^{n_{\rm test}}\vert y(x_{i},\hat{\beta})-y(x_{i},\hat{\beta}_{O})\vert , 
\]
where $\hat{\beta}$ is the solution of the considered   method, $\hat{\beta}_{O}$ is the solution of the ORACLE method and $n_{\textrm{test}}$ is the size of the testing set. In the case of artificial data sets, we replace $y(x_{i},\hat{\beta}_{O})$ by the true posterior probability $P(Y=1\vert X_i)$. The AE shows how close are the predictions of the considered methods to the predictions of ORACLE method (or to the true posterior probabilities for artificial datasets). Small value of the AE indicates that the  performance of the considered  PU method  is similar to that of   the oracle. In the experiments,  we observe that for some datasets the differences between the methods are more pronounced for AE than for standard accuracy.    

There are several important questions which are addressed in the experiments. First, how much do we lose compared to the oracle method and how much the proposed methods improve the prediction accuracy of the naive approach? Secondly,  what is the impact of various labelling schemes and how the performance deteriorates with decreasing label frequency? Thirdly, how robust are the considered methods against deviations from the logistic model? 
\subsection{Data sets}
We consider two artificial data sets: Artif1 and Artif2. They are obtained as follows. We first generate feature vector $X\sim N(0,I)$, where $I$ is $p\times p$ identity matrix. Then we pick the true class variable from Bernoulli distribution with success probability
$
P(Y=1\vert X)=F(X^{T}\beta),
$
where $\beta=p^{-1/2}(1,\ldots,1)^T$ and $F$ denotes the probability distribution function (PDF). We consider two forms of $F$: in the case of Artif1 we use PDF of the standard logistic distribution ($F(s)=\sigma(s)$) and in the case of Artif2 we use PDF of the standard Cauchy distribution (location $0$ and scale $1$). Since all considered methods are based on logistic model, Artif1 corresponds to the correct specification of the fitted model, whereas Artif2 corresponds to the misspecified model. 
Note that as Cauchy distribution significantly differs from logistic distribution in the tails,  Artif2 is strongly misspecified.
The advantage of using artificial datasets is that the true posterior probability is known (as opposed to the real datasets) and therefore it is possible to assess how accurately the considered methods estimate the posterior probability.  
In addition to artificial datasets, we consider $8$ benchmark datasets from UCI Machine Learning Repository. A short summary of each dataset can be found in Table 1 in Appendix 2. They were chosen to account for variable  characteristics of data (number of observations, number of features and difficulty of the classification problem). The  penultimate column of the  Table 1 contains the values of $R^2$ (proportion of explained deviance) calculated for the ORACLE method; the $R^2$ describes the goodness of fit of the model (the larger the better) and can be treated as a measure of the difficulty of classification problem.  For the considered datasets $R^2$ ranges from $0.23$ to $0.93$.

\subsection{Labelling scenarios}
We consider three methods of generating observed target variable $S$ based on the true target variable $Y$. The first scenario corresponds to SCAR assumption whereas for the two remaining schemes SCAR is violated. The second scenario is similar to the one considered in \cite{Gong2021} where the logistic sigmoid function is  also used to describe the propensity score. The last  scenario was considered in \cite{bekker2018phd} and is similar to the one considered in \cite{bekker2019ecml}.
\begin{enumerate}
\item \textbf{Scenario 1.} We consider constant propensity score function $e(x)=P(S=1\vert Y=1,X=x)=c$, where $c$ is label frequency which varies in simulations (SCAR scenario).
\item \textbf{Scenario 2.} Logistic propensity score function $e(x)=\sigma(x^T\gamma)$ is considered, where $\gamma= p^{-1/2}(g,\ldots,g)^{T}$ and $g$ is a parameter which varies in simulations. Obviously, for $g=0$ the scenario reduces to scenario 1 with $c=0.5$.
\item \textbf{Scenario 3.} Propensity score function is defined as $e(x)=\prod_{j=1}^{k}[sc(x(j),p^-,p^+)]^{1/k}$, where $x(j)$ is $j$-th coordinate of $x$ and 
$
sc(x(j),p^-,p^+):=p^-+\frac{x(j)-\min x(j)}{\max x(j)-\min x(j)}(p^+-p^-)
$ and $k$ is a chosen integer smaller than $p$.
\end{enumerate}

\subsection{Results}

We first analyse labelling scenario 1. 
Figures \ref{Fig_Artif_ls1}, \ref{Fig_acc_bench_ls1} show the impact of label frequency $c$ on the accuracy and  the approximation error for labelling scheme 1; see also Fig. 1 in Appendix 2 which shows approximation errors for scenario 1 for benchmark datasets. As  expected, the performance deteriorates with decreasing $c$ and all the curves approach the oracle curve for $c\approx 1$. The proposed TM method is the clear winner for both artificial data sets and is among the best performing methods for most benchmark datasets. 
The advantage of TM becomes significant for small $c$, e.g. for Breast Cancer and spambase  data sets its  approximation errors for $c=0.25$ are around 50\% smaller than approximation errors for EM and LBE. Importantly, TM usually works better than two most related methods: EM and LBE as well as  TM SIMPLE, which indicates that the proposed method of the  propensity score estimation is crucial for the  improved performance.
Generally, the ranking of the method depends on particular dataset and value of~$c$.  For example, the LBE method is a clear winner for heart-c and  wdbc for larger $c$, but it works significantly worse than other methods for  Breast Cancer and spambase when $c$ is small. On the other hand, the TM is a clear winner for these two datasets (Breast Cancer and  spambase) and small $c$, but it works much worse for adult dataset and larger $c$.
As expected, we observe the highest approximation errors for the NAIVE method which is due to the fact that it tends to underestimate $y(x)$ and the bias increases with decreasing $c$. Interestingly, we obtain similar results for Artif1 and Artif2 datasets, which shows that the methods are robust against model misspecification, even when the misspecification is strong. The JOINT method works worse than the TM which is probably associated with the optimization issues in the case for the JOINT method, which uses MM algorithm to search of the maximizer. There is certainly room for improvement in the JOINT method.
Tables 2 and 3 in Appendix 2 show the performance measures averaged over different $c=0.05,0.1,\ldots,0.95$.
We used the simple t-test to verify whether the difference between the winner method (in bold) and the second best is significant.
The last column contains the p-value of the t-test. Importantly, in most cases, the p-value is smaller than $0.05/15=0.003$. Note that $0.003$ is a level corresponding to Bonferroni correction as $0.05$ is a standard significance level and $15$ is the number of pairs among 6 considered methods. We remark that the proposed TM method has the lowest  rank averaged over data sets; it is a winner for 6/10 datasets (with respect to accuracy) and for 8/10 datasets (with respect to AE).

The results for different values of parameter $g$ for scenario 2 are presented in Figure \ref{Fig_Artif_ls2}. For artificial datasets, we observe stable results for all methods  (except the naive method), i.e. the accuracy and AE do not vary significantly for different values of parameter $g$. In addition, we can see the superior performance of the TM method. The TM exhibits the most stable performance among the methods. 
For naive method, the accuracy increases with $g$ and the AE decreases with $g$.  Tables 4 and 5 in Appendix 2 show the overall results for scenario 2 averaged over  different $g=0.1,0.2,\ldots,1$. 
Here, the LBE methods achieves on average the highest accuracy (it has the smallest averaged ranks), whereas the TM achieves on average the smallest approximation error.

Tables 6 and 7 in the Appendix 2 show the results for scenario 3, for $p^{-}=0.2$, $p^{+}=0.6$ and $k=5$.
The conclusions remain very similar to those of  the first and the second scheme. Namely, the TM method achieves the smallest approximation error (whereas the LBE is the second best); it is a winner for 8/10 datasets (wrt to AE) and for 5/10 datasetes (wrt accuracy).

In summary, we conclude that the averaged ranks over data sets  are the smallest for TM and LBE, with EM and TM SIMPLE being the third and the fourth method, respectively. This is a positive message, it means that the TM method (as well as LBE and EM) allows to estimate $y(x)$ accurately in a general case   when the SCAR assumption is not necessarily satisfied. 
%For Scenario 2 and 3 (when SCAR is not satisfied) only  TM, EM, LBE and TM SIMPLE are the winners, thus underling the importance of considering non-constant propensity score in the estimation process.
Surprisingly, the TM SIMPLE method works quite well although it is based on a crude estimator of the propensity score function. For example it outperforms other methods for diabetes dataset and for scenario 2, for which the propensity score is not constant. As the TM SIMPLE is much faster than the remaining competitors (TM, EM, LBE and JOINT), it can be recommended in applications where reducing computational time is crucial. 

\begin{figure}[h]
\centering
$\begin{array}{cc}
\includegraphics[width=0.22\textwidth]{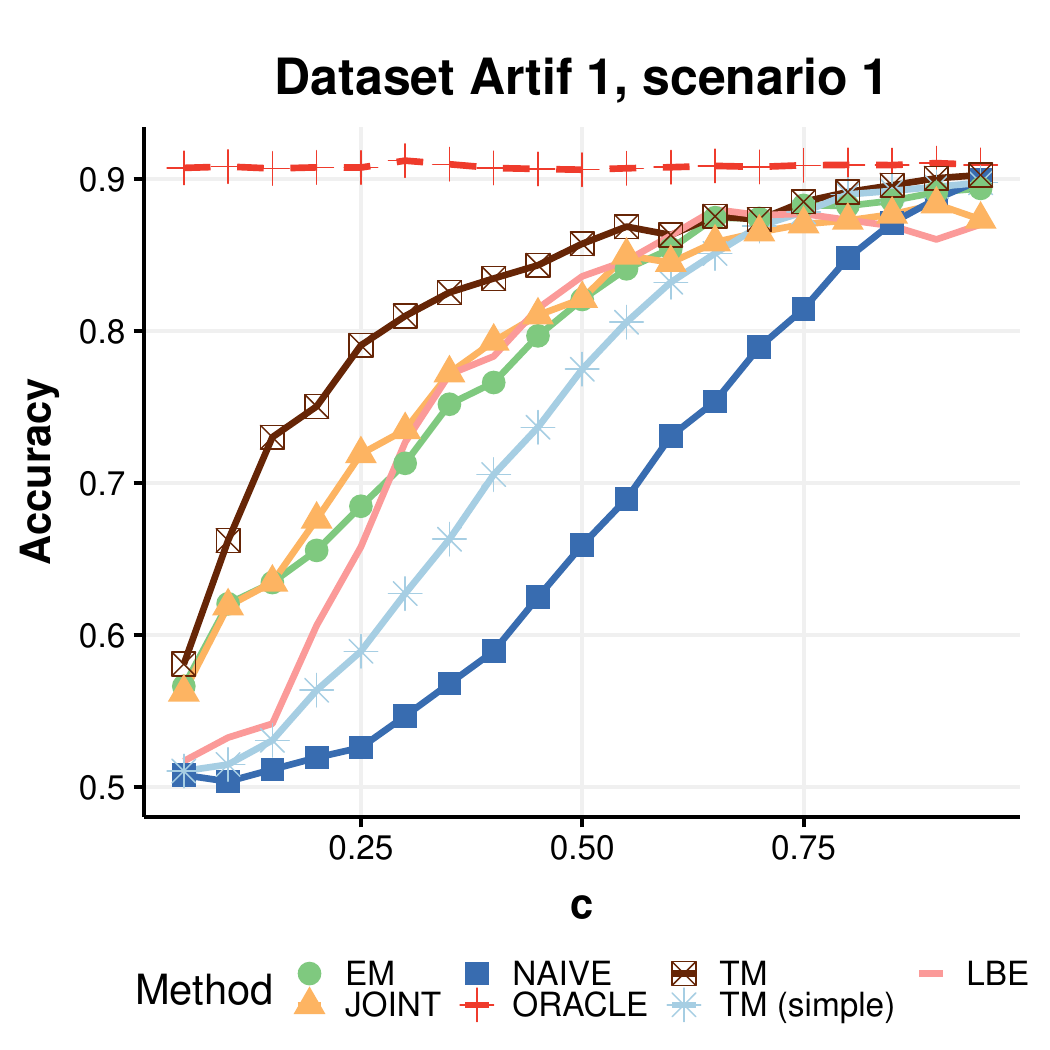} &
\includegraphics[width=0.22\textwidth]{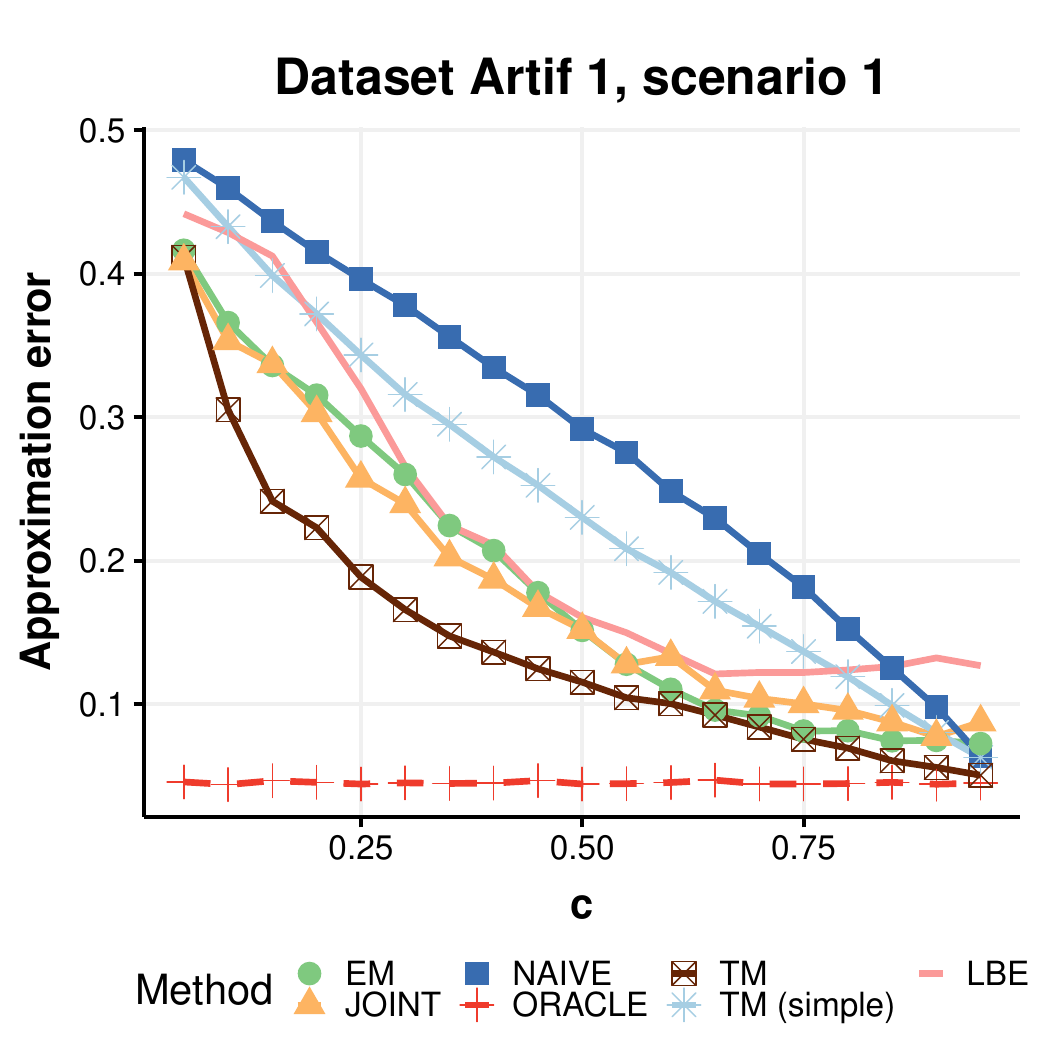}\\
\includegraphics[width=0.22\textwidth]{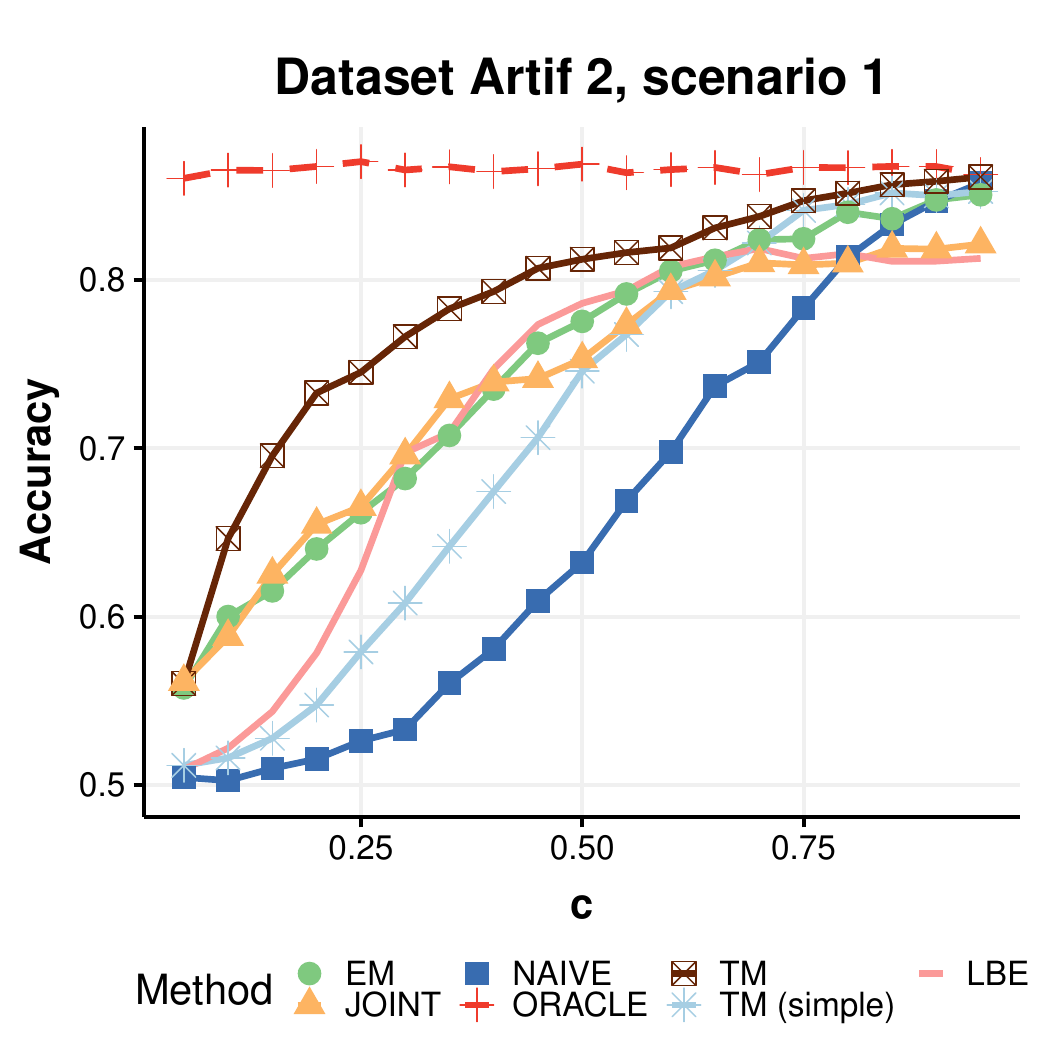}&
\includegraphics[width=0.22\textwidth]{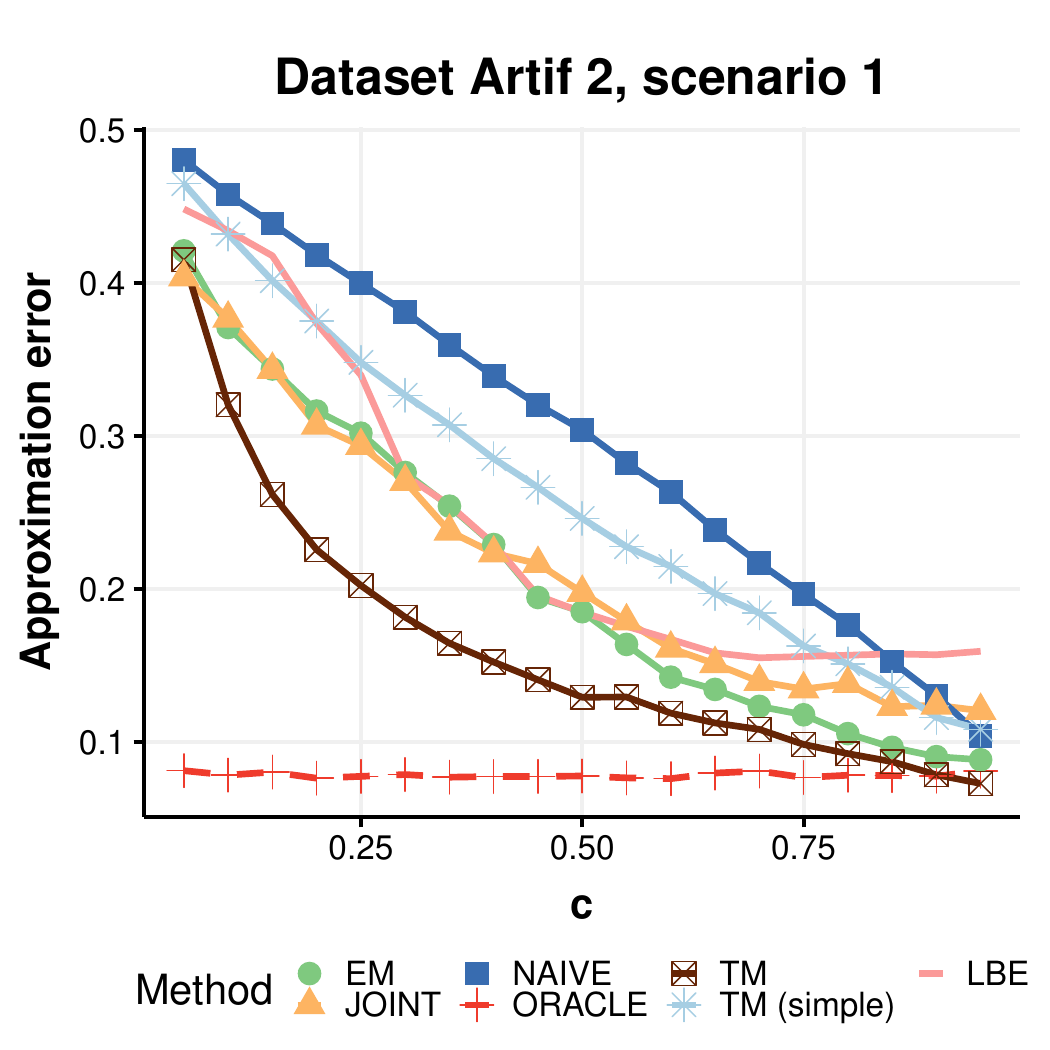} \\
\end{array}$
\caption{Accuracy and approximation errors for datasets Artif1 and Artif2 for scenario 1 and  different values of $c$.}
\label{Fig_Artif_ls1}
\end{figure}

\begin{figure}[h]
\centering
$\begin{array}{cc}
\includegraphics[width=0.22\textwidth]{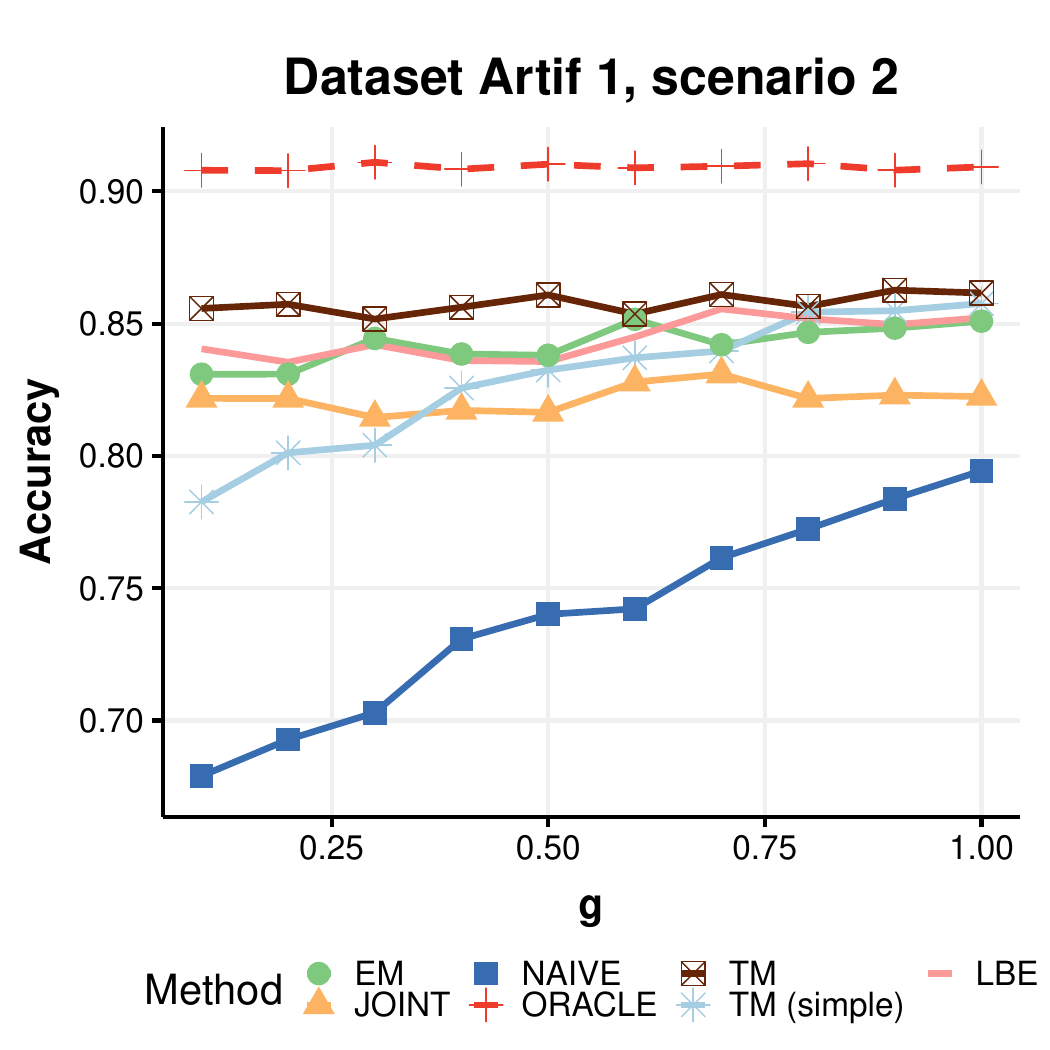} &
\includegraphics[width=0.22\textwidth]{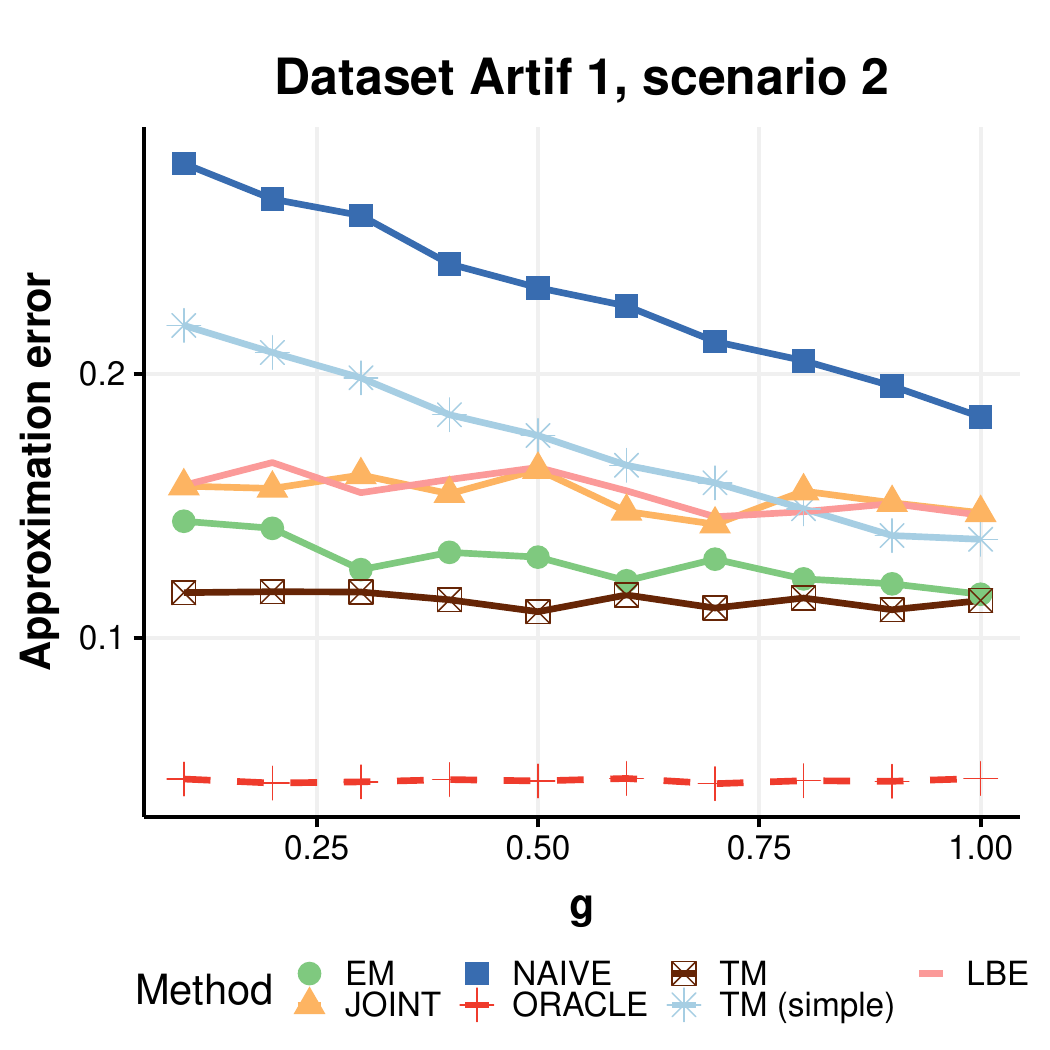}\\
\includegraphics[width=0.22\textwidth]{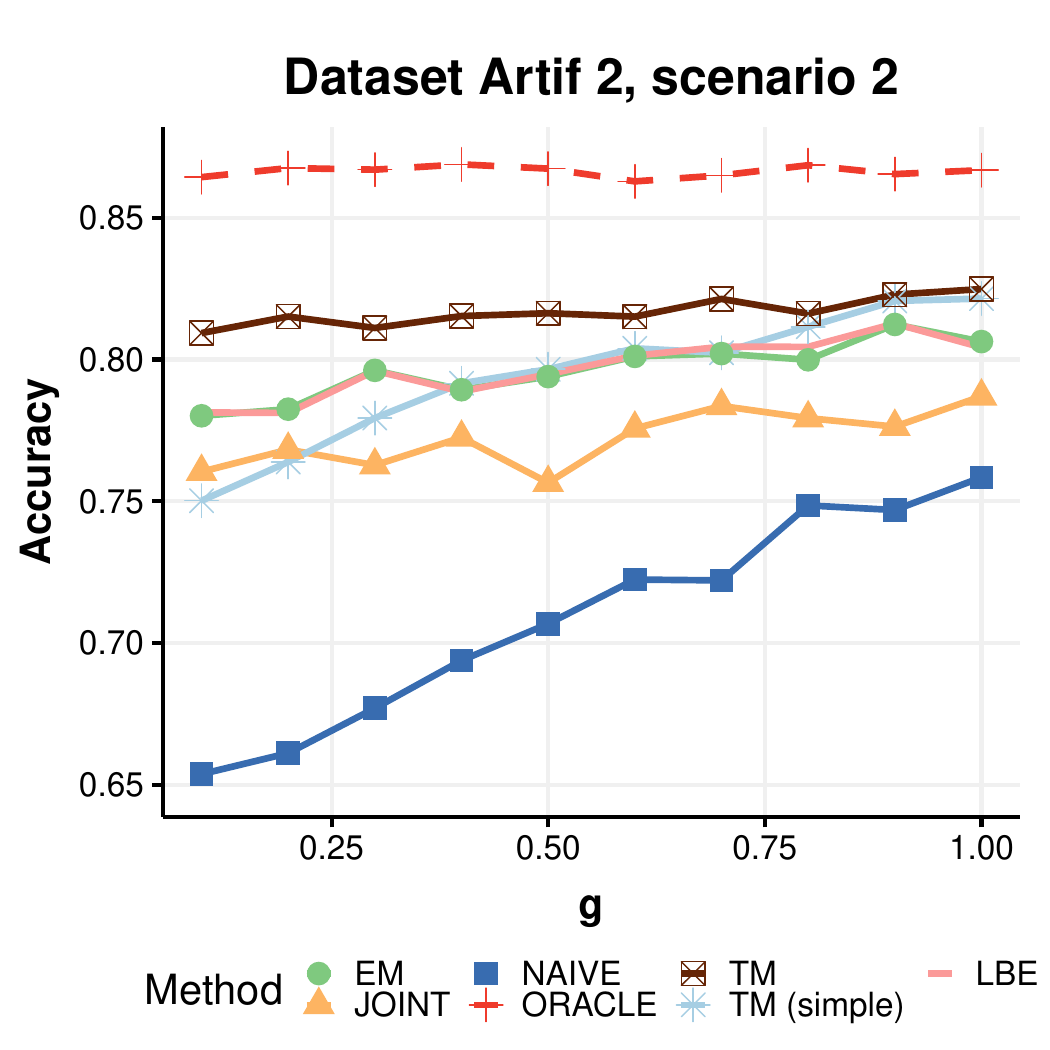}&
\includegraphics[width=0.22\textwidth]{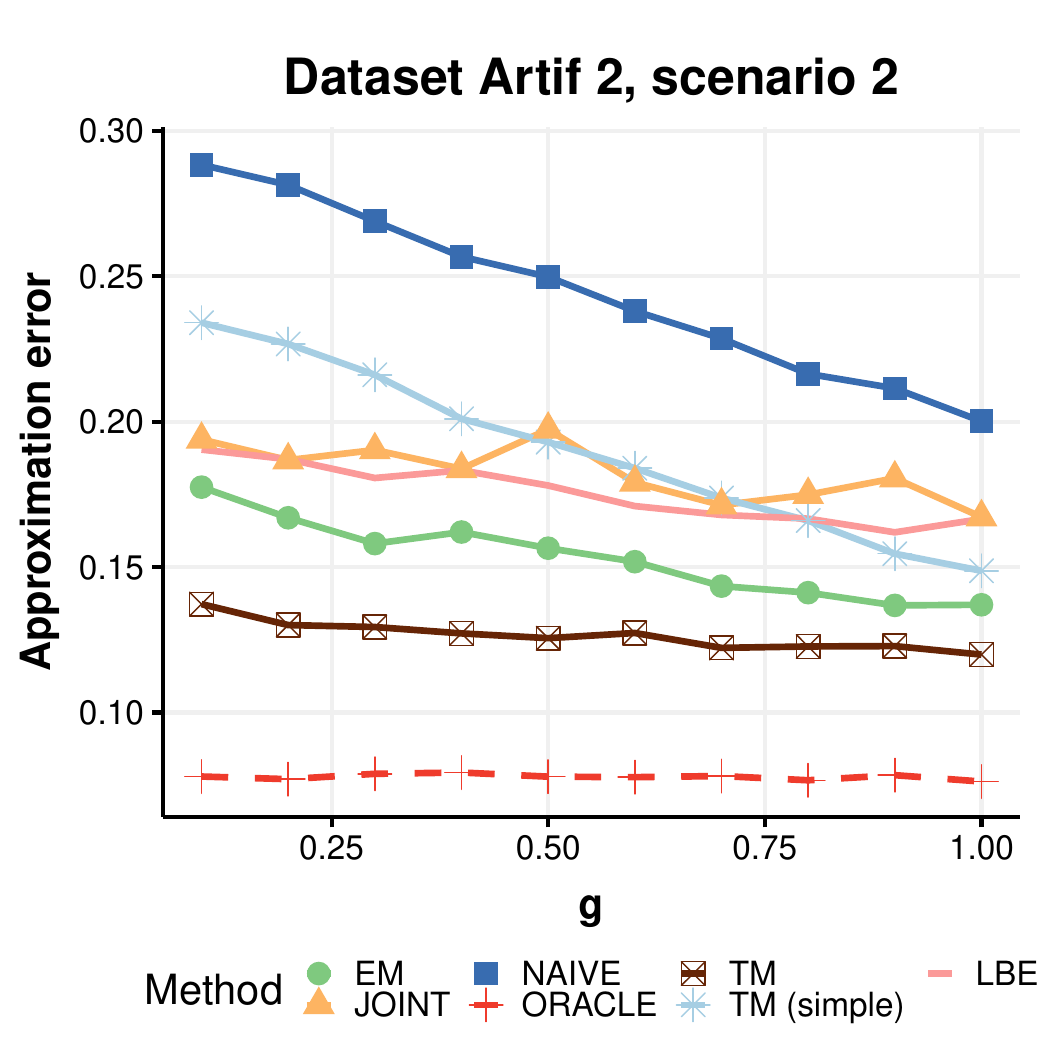} \\
\end{array}$
\caption{Accuracy and approximation errors for datasets Artif1 and Artif2 for scenario 2 and  different values of $g$.}
\label{Fig_Artif_ls2}
\end{figure}

 \begin{figure}[ht!]
 \centering
 $
 \begin{array}{cc}
 \includegraphics[width=0.22\textwidth]{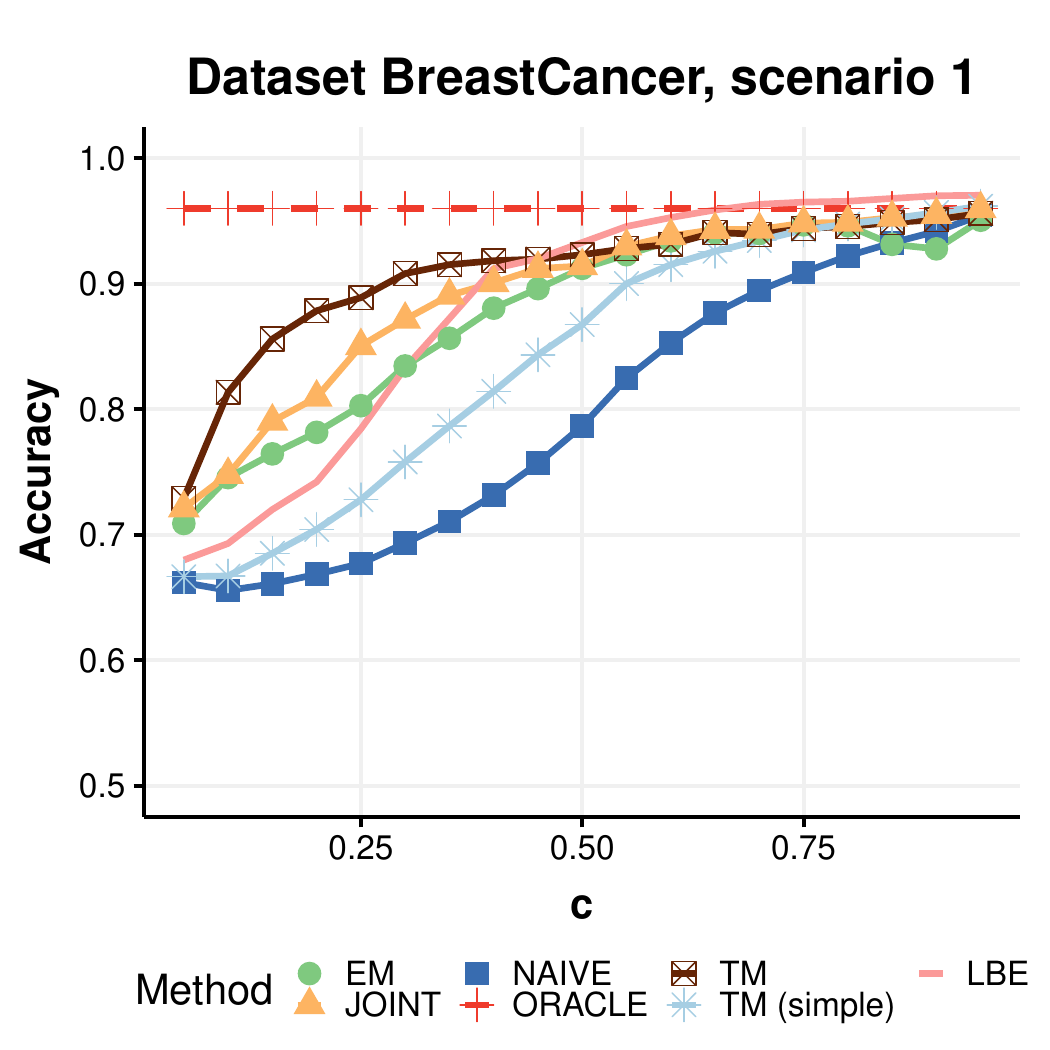} &
 \includegraphics[width=0.22\textwidth]{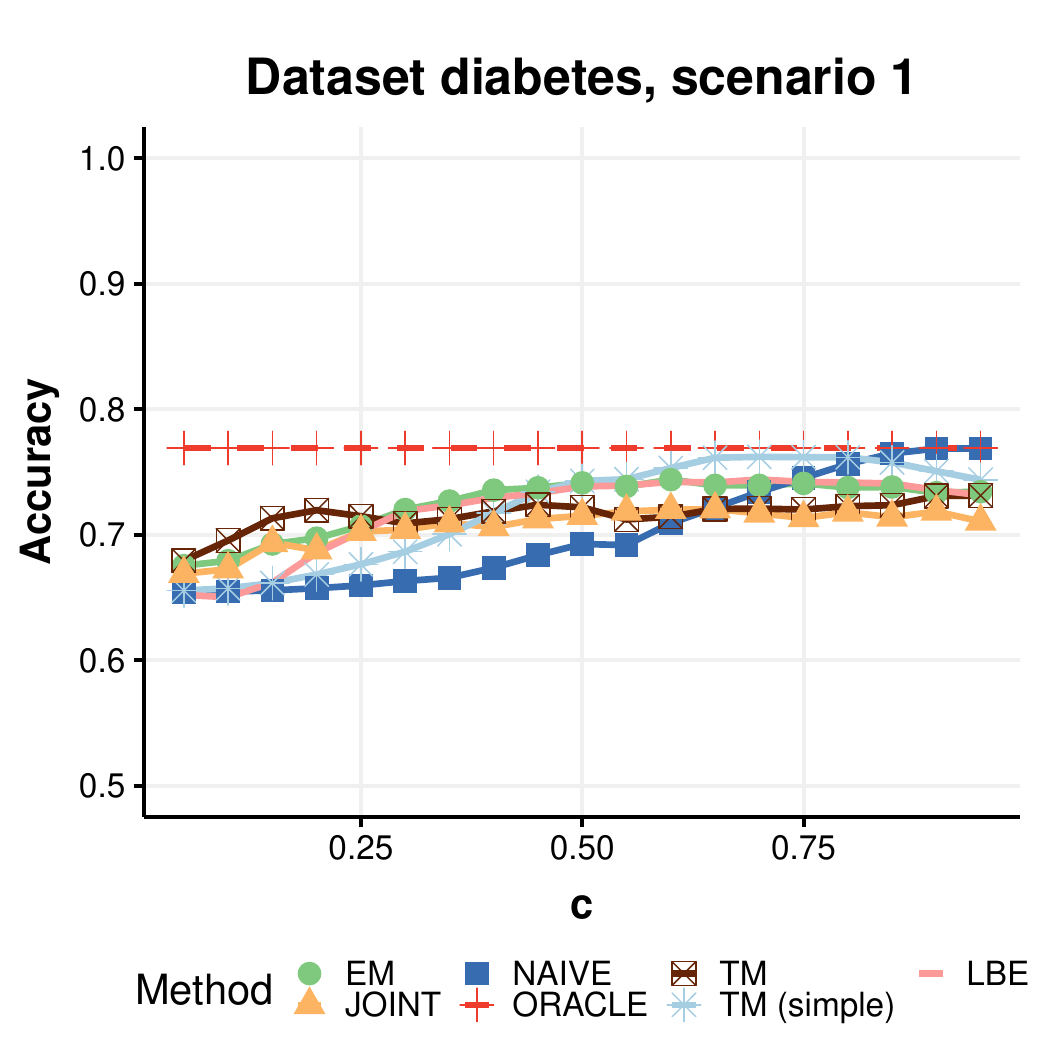}\\
 \includegraphics[width=0.22\textwidth]{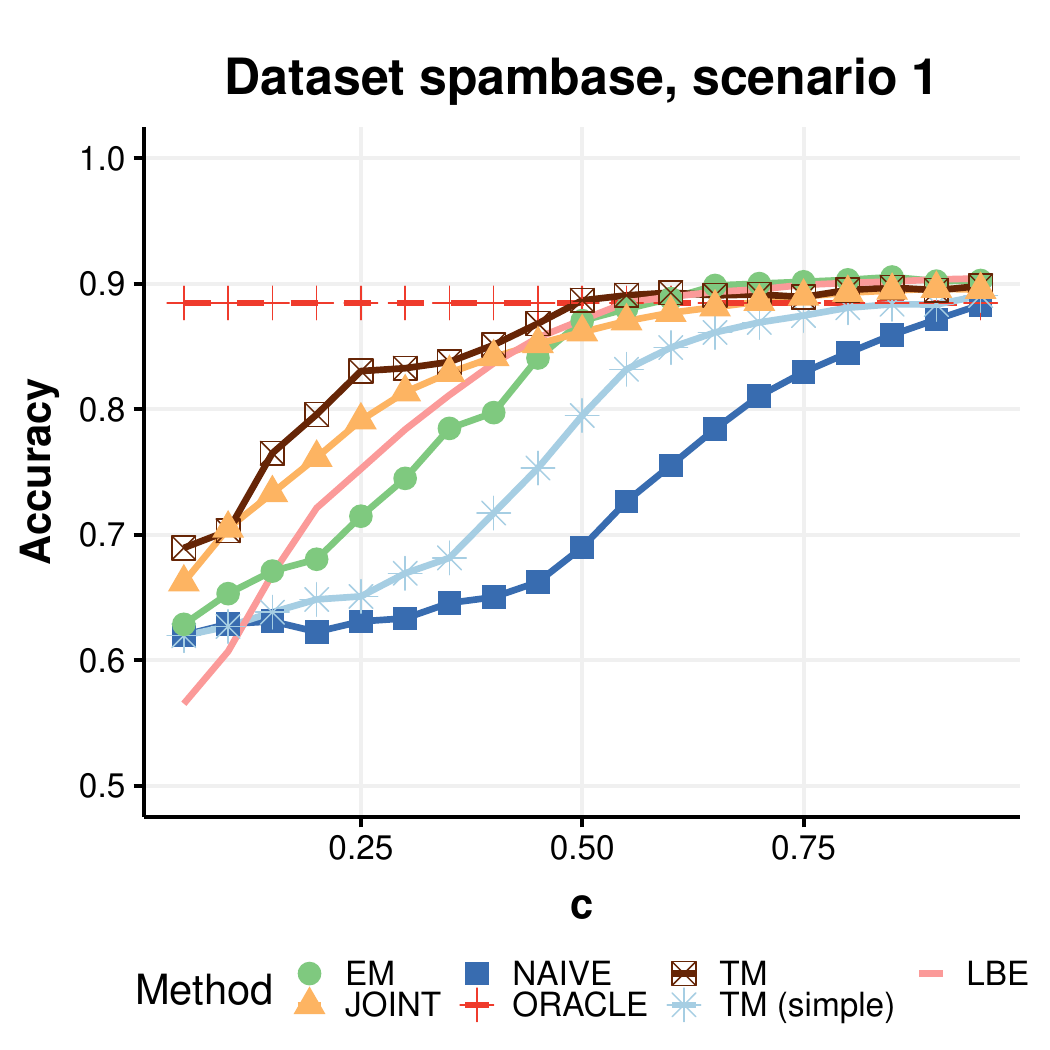} &
 \includegraphics[width=0.22\textwidth]{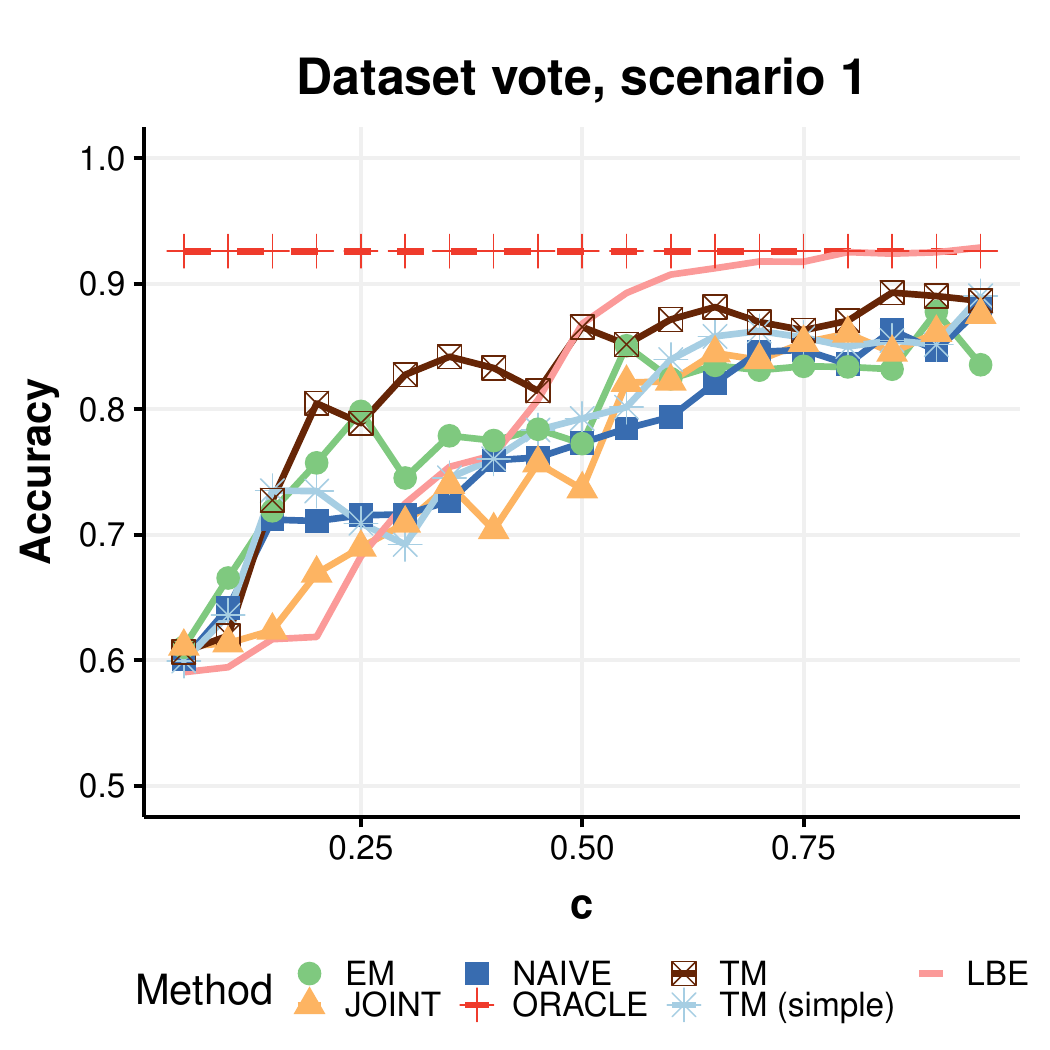} \\
 \includegraphics[width=0.22\textwidth]{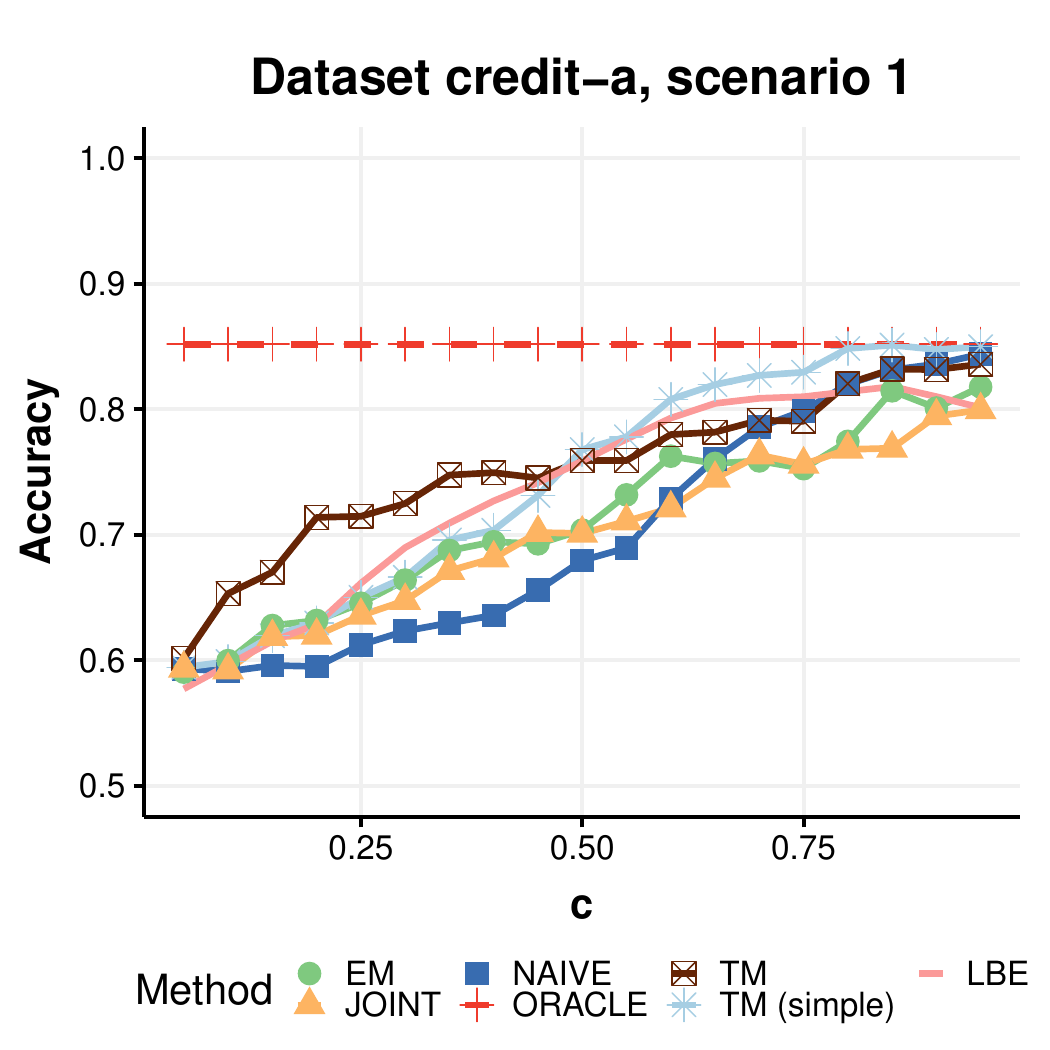} &
 \includegraphics[width=0.22\textwidth]{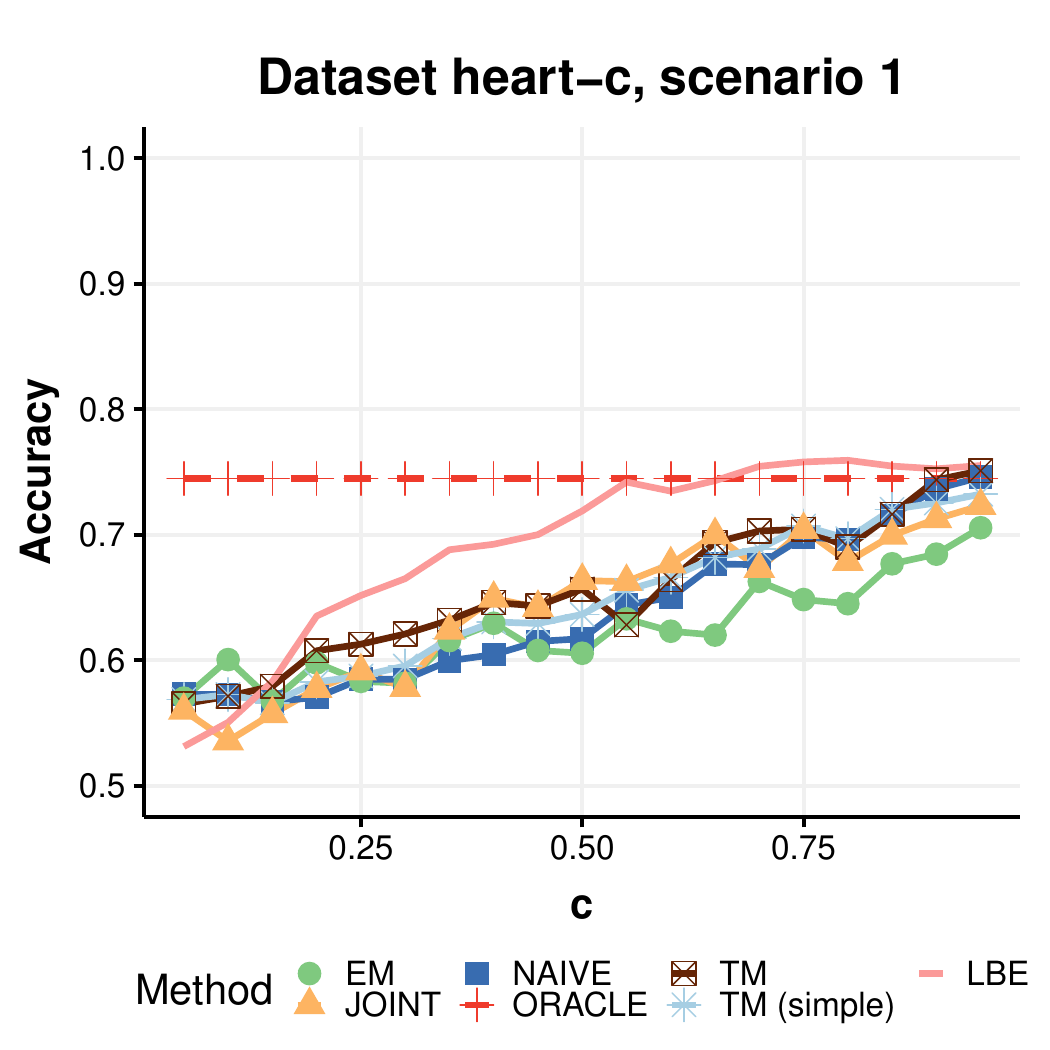} \\
 \includegraphics[width=0.22\textwidth]{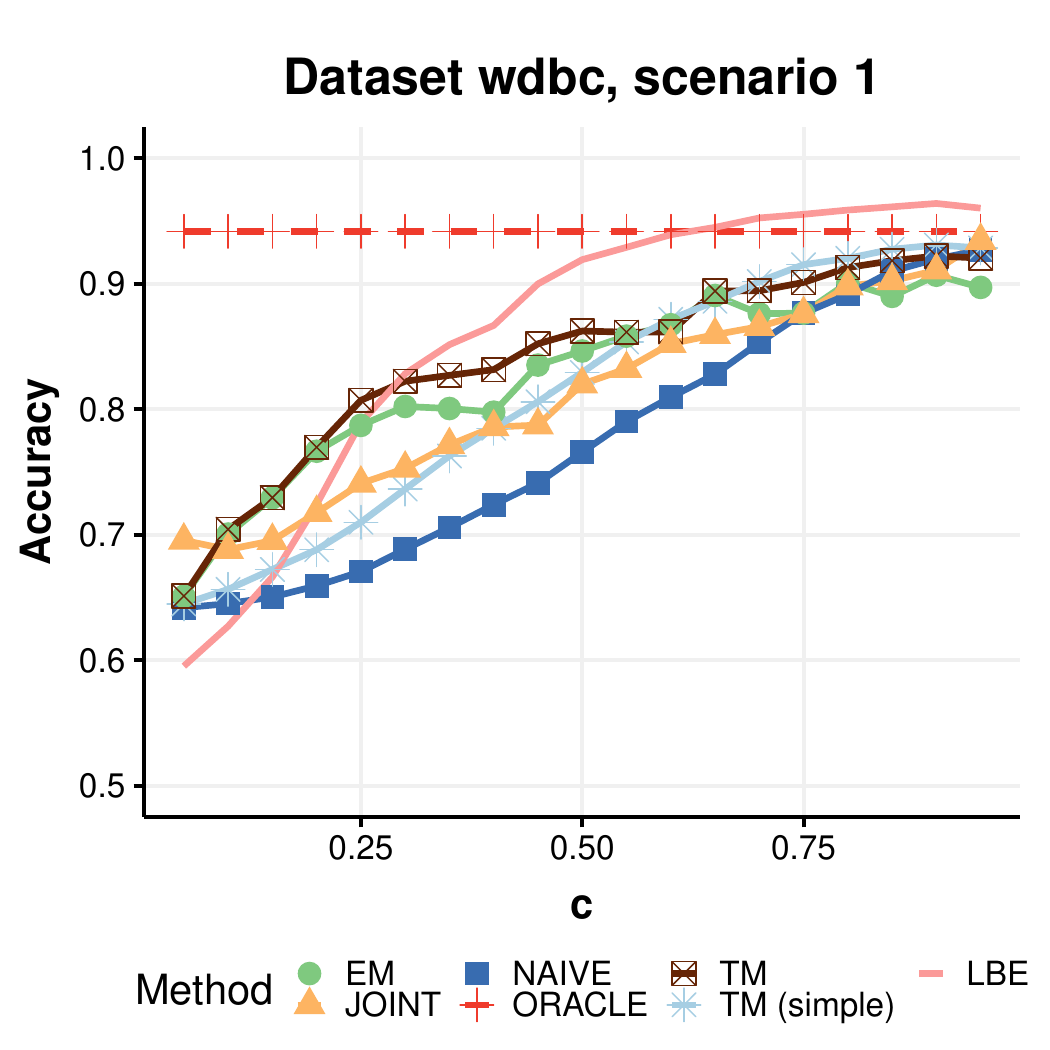} &
 \includegraphics[width=0.22\textwidth]{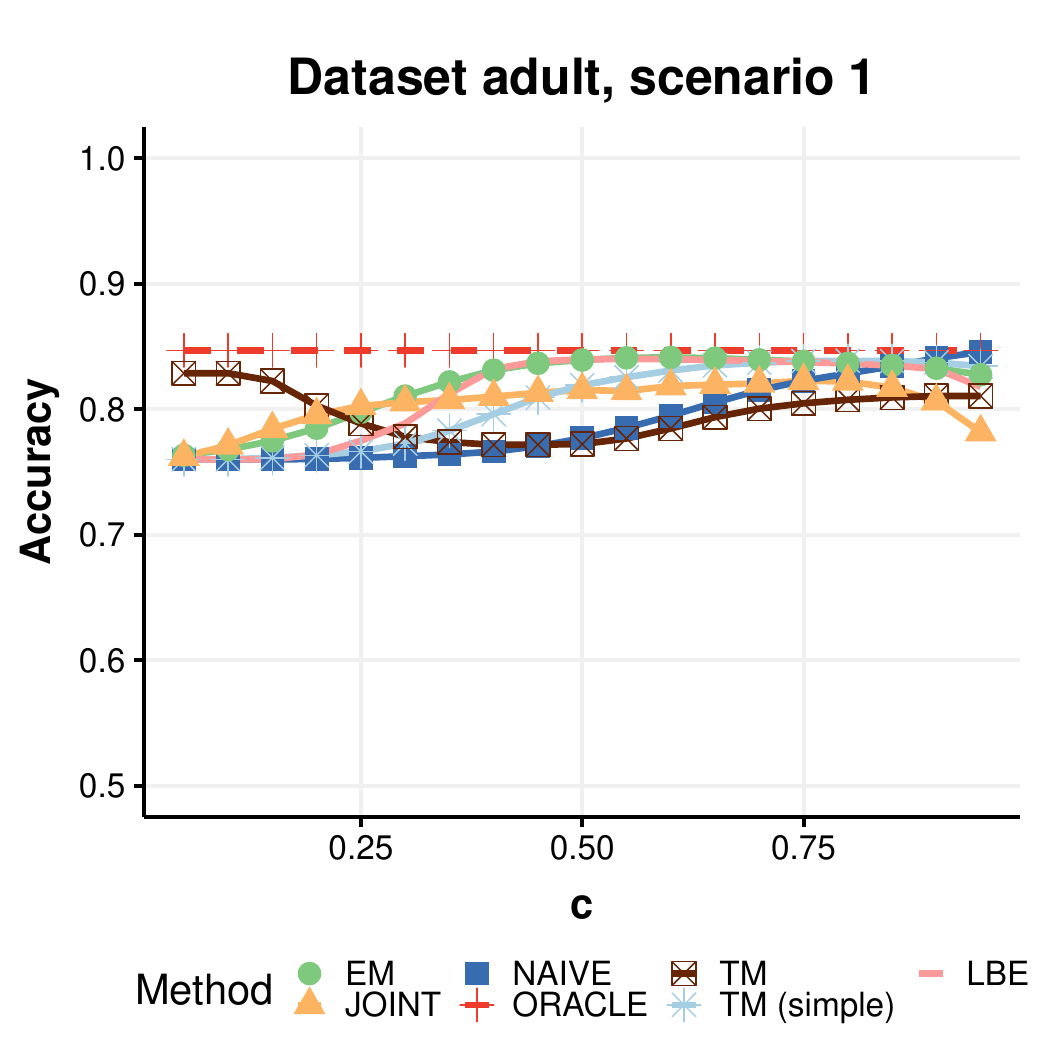} \\
 \end{array}$
 \caption{Accuracy for benchmark datasets for scenario 1 and  different values of $c$.}
 \label{Fig_acc_bench_ls1}
 \end{figure} 

\section{Conclusions} 
\label{Sec:Conclusions}
We have considered estimation problem for PU data when  SCAR assumption is not necessarily satisfied and have shown that when   posterior  probability $y(x)$ and propensity function $e(x)$ are both governed by the double  logistic model, their corresponding parameters $\tilde\beta^*$ and $\tilde\gamma^*$ are identifiable. This motivates JOINT method of estimation of 
$\tilde\beta^*$ and $\tilde\gamma^*$ by alternately maximising loglikelihood of the product logistic model. We have also proposed the second method, called TM, which relies on iterative maximisation of two estimated Fisher consistent expressions for the unknown parameters. For both approaches under certain assumptions we have proved consistency of the underlying estimators.
Analysis of their behaviour indicates that considering non-constant propensity function is crucial for estimation of the aposteriori probability as well as for performance of the  corresponding classifiers. In particular, the results show promising behaviour of TM method; it outperforms EM algorithm for most datasets and works on par with LBE method in the case of non-constant propensity score function. For constant propensity score, we observe the superior performance of the TM compared to the LBE.

There are still interesting issues left for future research. First, in addition to MM algorithm, other non-convex optimization procedures can be used in the JOINT method.This is important as its underperformance is likely caused by optimisation issues.
As for TM method, although the proposed method of estimation of the stratum $\mathcal{P}$  described in Section \ref{Sec:TM} works effectively, we believe that this crucial problem is worth further studying and there is still room for improvement. %Finally we plan to analyse regularized versions of the considered methods, which is particularly important for high-dimensional data sets. 
For the high-dimensional $X$ consideration of the regularised versions of the introduced methods is of interest. Moreover, note that the presented developments  open the way to test SCAR assumption, which under the considered model is equivalent to $\tilde\gamma=(\tilde\gamma,0^T)^T$.
Finally, the proposed methods can be possibly  adapted  to multi-label PU data, where multiple target variables are considered 
simultaneously and in many aplications, such as recommender systems, selection bias is frequent.

%%%%%%%%%%%%%%%%%%%%%%%%%%%%% SCENARIO 1: %%%%%%%%%%%%%%%%%%%%%%%%%%%%%

\ack The comments of  the reviewers, which
helped to improve the final version of  paper, are acknowledged. The research of W.R. was supported by the Polish National Science Center grant: NCN UMO-2018/31/B/ST1/00253.

\bibliography{References}
\end{document}